\newcommand{\G}{\mathcal{G}}
\newcommand {\graphset}{\mathbb{G}}
\newcommand{\V}{{\mathcal{V}}}
\newcommand{\E}{{\mathcal{E}}}
\newcommand{\R}{\mathbb{R}}
\newcommand{\MarkovP}{\mathcal{P}}
\newcommand{\mbf}{\mathbf}
\newcommand{\scr}{\mathcal}
\newcommand{\eqdef}{\protect :=}
\DeclareMathOperator{\diag}{diag}
\DeclareMathOperator{\Prob}{Pr}
\DeclareMathOperator{\Exp}{E}
\newtheorem{theorem}{Theorem}
\newtheorem{lemma}{Lemma}
\newtheorem{proposition}{Proposition}
\newtheorem{remark}{Remark}
\journal{Computer Communications}
\begin{document}

\begin{frontmatter}



\title{Detecting Separation in Robotic and Sensor Networks}

\author[label1]{Chenda Liao}
\author[label2]{Harshavardhan Chenji}
\author[label1]{Prabir Barooah}
\author[label2]{Radu Stoleru}
\author[label3]{Tam\'{a}s Kalm\'{a}r-Nagy}

\address[label1]{Dept.~of Mechanical and Aerospace Engineering, University of Florida, Gainesville, FL.}
\address[label2]{Dept.~of Computer Science and Engineering, Texas A\&M University, College Station, TX.}
\address[label3]{Dept.~of Aerospace Engineering, Texas A\&M University, College Station, TX.}



\begin{abstract}
  In this paper we consider the problem of monitoring detecting
  separation of agents from a base station in robotic and sensor
  networks. Such separation can be caused by mobility and/or failure
  of the agents. While separation/cut detection may be performed by passing
  messages between a node and the base in static networks, such a
  solution is impractical for networks with high mobility, since
  routes are constantly changing. We propose a distributed algorithm to
  detect separation from the base station.  The algorithm consists of
  an averaging scheme in which every node updates a scalar state
  by communicating with its current neighbors. We prove that if a node
  is permanently disconnected from the base station, its state
  converges to $0$. If a node is connected to the base station in an
  average sense, even if not connected in any instant, then we show
  that the expected value of its state converges to a positive
  number. Therefore, a node can detect if it has been separated from
  the base station by monitoring its state. The effectiveness of the proposed
  algorithm is demonstrated through simulations, a real system implementation and
  experiments involving both static as well as mobile networks.
\end{abstract}

\begin{keyword}
mobile ad-hoc network  \sep robotic network  \sep sensor network \sep fault detection


\end{keyword}

\end{frontmatter}

\section{Introduction}

Sensor and robotic networks is a quickly developing area extending the
boundaries of traditional robotics and usual sensor networks~\cite{mckee2003nr,MH_etal_JFR:07}. In such a network, static
 as well as mobile nodes (robots) with varying levels of sensing, communication
and actuation capability are used to observe, monitor, and control the
state of physical processes. For example, in a scenario depicted in
Figure~\ref{fig:big_picture}, a team of ground robots may serve as information
aggregators from a large number of static sensors deployed in an area as well
as relays to send the processed data to more maneuverable autonomous aerial
vehicles. We refer to all the devices that take part in sharing information,
whether static or mobile as \emph{nodes} or \emph{agents}. Thus in
Figure~\ref{fig:big_picture} the agents are the chopper, the mobile ground
vehicles and the static sensors.

\begin{figure}[t]
\centering\includegraphics[width=.7\textwidth]{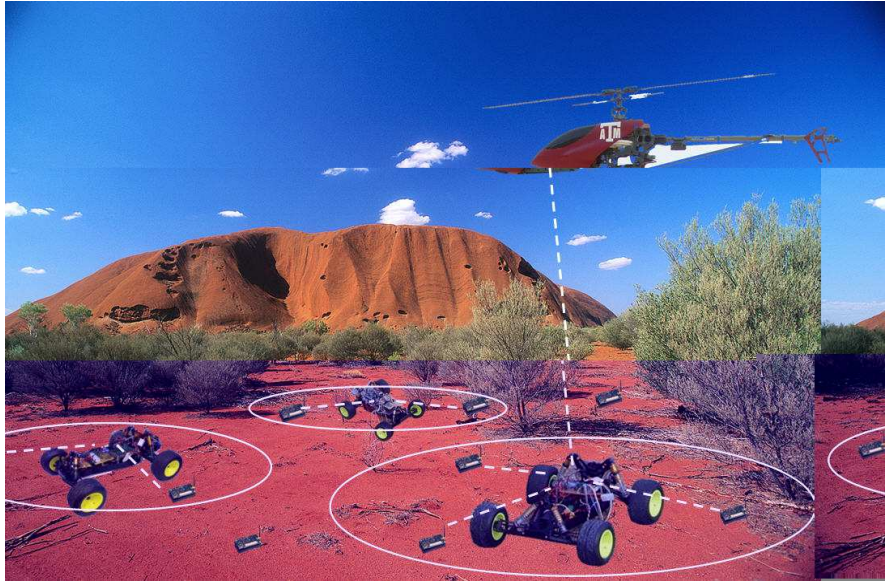}\caption{A
heterogeneous robotic sensor network: ground robots aggregating information
collected from a large number of static sensor nodes and relaying to an aerial
robot. }%
\label{fig:big_picture}%
\end{figure}

The communication topology of a robotic and sensor network is likely
to change over time. Such changes can occur not only due to the
mobility of the robotic nodes, but are also likely with static
agents due to failures. An agent may fail due to various factors such
as mechanical/electrical problems, environmental degradation, battery
depletion, or hostile tampering. These causes are especially common
for networks deployed in harsh and dangerous situations for
applications such as forest fire monitoring, battlefield or emergency
response operations~\cite{kumar2004ras}.

Information exchange through wireless communication is a crucial
ingredient for a sensor and robotic network system to carry out the
tasks it is deployed for. Long range information exchange between
agents is usually achieved by multi-hop wireless communication. In mobile networks, it is quite possible that the agents get
separated into two or more sub-networks with no paths for data routing
among these sub-networks. We say that a \emph{cut} has occurred in such an
event. A node that is disconnected from the base station at some time
may also get reconnected later, e.g., when a mobile node moves in such
a way that it restores connectivity between two disconnected sets of
agents. In a robotic and sensor network, cuts can occur and
disappear due to a combination of node mobility and node failure.

Multi-hop transmission typically requires routing data from a source
node to a sink node, which is usually the base station.  In a network
with highly dynamic topology -- common for sensor and robotic networks
- maintaining updated routing tables, or discovering routing paths on demand, are challenging and energy inefficient
tasks~\cite{Perkins03aodv}. In such situations, sometimes information transfer
from a node to a base station is accomplished by waiting till the node
comes within range of the base station or to another node that is
close to the base station~\cite{Zebranet}. In either scenario, it
is imperative for the agents to know if a cut occurs, so necessary
action can be taken. For example, once a cut is detected, mobile
robots can attempt to heal the network by repositioning themselves or
placing new communication relay nodes in critical regions. There are
other advantages of an ability to detect separation from the base. If
a  node that is separated from the base station initiates data
transmission to the base station, it will only lead to wastage of
precious on-board energy of itself and its nearby nodes that attempt
to route the packets to the base station, since no path to the
destination exists. Therefore, after a cut occurs it is better for
such nodes not to initiate any long-long information transfer. This
requires the nodes to be able to monitor their connectivity to the
base station. In addition, any proposed solution for
separation detection cannot rely on centralized information
processing, since separation will prevent information from reaching the
central node.

A \emph{cut} is defined for static networks as the separation of the
network into two or more disjoint
components~\cite{NS_SS_CT_IPSN:05}. However, for mobile networks we
need to distinguish between a node getting disconnected from the rest
of the nodes temporarily, for a short time interval, from getting
disconnected for a very long time interval, or in the extreme case,
all future time. A node may get disconnected temporarily due to
mobility, or due to the failure of certain nodes, and then reconnected
later. The more dynamic the topology is, the more
likely that disconnections and re-connections will occur
frequently. Therefore, what is needed is an ability of the nodes to detect if it
has been disconnected from the source for a long enough time that will
seriously hamper its ability to send data to the base station if the
need arises. If the disconnection is for a very short period, that is
not a serious cause for concern as the node can simply wait for a
brief period to see if it gets connected again, which may occur due to
the motion of itself or other nodes. We refer to the first as \emph{intermittent disconnection} while the second is
called \emph{permanent disconnection}. The qualifier ``permanent''  is
qualitative, it merely means ``long enough'' to necessitate some
action on the part of the node as discussed earlier.

However, little attention has been paid to the problem of detecting
cuts. Notable exceptions are Kleinberg \textit{et
  al.}~\cite{Kleinberg_IM:03} - who study the problem of detecting
network failures in wired networks -- and Shrivastava \textit{et al.}%
~\cite{NS_SS_CT_IPSN:05} and Barooah~\cite{PB_CDC:08}, who propose
algorithms for detecting cuts in wireless sensor networks. The problem
of detecting cuts in mobile networks has attracted even less
attention. The solutions proposed
in~\cite{Hauspie03partitiondetection,HR_RW_JS_TZ_SIGCOM:04} require
routing packets from the base station to each node periodically. When
a node fails to receive this packet for a certain amount of time, it
suspects that a cut has occurred. This approach, however, requires
routing data between far away nodes, which is challenging in networks
of mobile nodes. While algorithms for coordinating the motion of the
agents to ensure network connectivity has been developed in recent
times (see, for example,~\cite{AH_AC_VK_CT_JFR:07}), such algorithms
cannot guarantee network connectivity under all circumstances. This is
especially true when the robotic agents are operating in harsh and
uncertain environments. Thus, there is a need to develop distributed
algorithms for cut detection in robotic and sensor networks.

In this paper we describe a simple algorithm for cut detection in
robotic and sensor networks. The algorithm is applicable to networks
made up of static or mobile agents, or a combination of the two. This
algorithm -- called Distributed Source Separation Detection (DSSD)
algorithm -- is designed to allow every node to monitor if it is
connected to a specially designated node, the so-called \emph{source
  node}, or if it has been disconnected from the source node. The
source node is usually a base station. The reason for this terminology
comes from an electrical analogy that the algorithm is based on.  The
idea is quite simple: imagine the wireless communication network as an
electrical circuit where current is injected at the source node. When
a cut separates certain nodes from the source node, the potential at
these nodes becomes zero. If a node is connected to the source node
through multi-hop paths, either always or in a time-average sense, the
potential is positive in a time-average sense. In the DSSD algorithm,
every nodes updates a scalar (called its \emph{state}) which is an
estimate of its virtual potential in the fictitious electrical
network. The state update is performed by a distributed algorithm that
only requires a node to communicate to its neighbors. By monitoring
their computed estimates of the potential, nodes can detect if they
are connected to the source node or not. In addition, the iterative
scheme used in computing the node potentials is extremely fast, which
makes the algorithm scalable to large networks. The proposed DSSD
algorithm is fully distributed and asynchronous. It requires
communication only between neighbors, multi-hop routing between node
pairs that are not neighbors is not needed.

Performance of the algorithm in networks of static nodes was examined
through simulations previously in~\cite{PB_CDC:08}. Here we extend the
algorithm to both static and mobile networks. In the mobile case, by
modeling the evolution of the network over time as a Markov chain, we
show that the node states converge to positive numbers in an expected
sense as long a path exists in a time-average sense between the node
and the base station. The performance of the algorithm has been tested
in simulations as well as in an experimental platform with mobile
robots and human agents. These tests demonstrate the effectiveness of
the algorithm in detecting separation (and reconnection) of nodes in
both static and mobile networks. Since there is no existing prior work on the problem of
detecting separation in mobile networks that can operate without
multi-hop routing, we donot present comparison of the proposed
algorithm with existing algorithms for cut detection.


The rest of the paper is organized as follows. In Section~\ref{SECTION_DSSD}
we introduce the DSSD algorithm. The  rationale behind the algorithm
and its theoretical properties are described in Section~\ref{SECTION_theory}.
Sections~\ref{SECTION_Simulations} and~\ref{sec:system_impl} describe results
from computer simulations and experimental evaluations.
The paper concludes with a summary in Section~\ref{SECTION_Conclusions}.

\section{The Distributed Source Separation Detection (DSSD) Algorithm}\label{SECTION_DSSD}
We introduce some terminology about graphs that will be needed to
describe the algorithm and the results precisely (see for example the
excellent treatise by Diestel \cite{diestel2000gt}). Given a set
${\mathcal{V}}=\{v_{1},\ldots,v_{m}\}$ of $m$ elements referred to as
\emph{vertices}, and a set ${\mathcal{E}}=\{(v_{i},v_{j})\,|\,v_{i}%
,v_{j}\in{\mathcal{V}}\}$ of \emph{edges}, a \emph{graph}
$\mathcal{G}$ is defined as the pair
$({\mathcal{V}},{\mathcal{E}})$. A sensor network is modeled as a
graph $\mathcal{G}=({\mathcal{V}},{\mathcal{E}})$ whose vertex set
${\mathcal{V}}$ corresponds to the wireless sensor nodes and whose
edges ${\mathcal{E}}$ describe direct communication between nodes.
The size of a graph ${\mathcal{G}}$ is the number of its vertices
$|{\mathcal{V}}|$. The graphs formed by the nodes of the sensor and
robotic network are assumed to be \emph{undirected}, i.e. the
communication between two nodes is assumed to be symmetric. In the
language of graph theory, the edges of an undirected graph are
unordered pairs with the symmetry relation
$(v_{i},v_{j})=(v_{j},v_{i})$. The \emph{neighbors} of vertex $v_{i}$
is the set $\mathcal{N}_{i}$ of vertices connected to $v_{i}$ through
an edge, i.e.
$\mathcal{N}_{i}=\{v_{j}|(v_{i},v_{j})\in{\mathcal{E}}\}$. The number
of neighbors of a vertex $\left\vert \mathcal{N}_{i}\right\vert $ is
called its \emph{degree}. A graph is called \emph{connected} if for
every pair of its vertices there exists a sequence of edges connecting
them.

In a mobile sensor and robotic networks the neighbor relationship can
change with time, so the graph in our study are in general time
varying: $\G(k) = (\V,\E(k)$, where $k=0,1,\dots$ is a discrete time
index. Note that we assume the set $\V$ of nodes does not change over
time; though certain nodes may fail permanently at some time and
thereafter not take part in the operation of the network.

In the proposed scheme, every node $v_i$ maintains a scalar variable
$x_i(k)$ in its local memory, which is called its \emph{state}. The base
station is designated as the \emph{source node}, though in principle
any node can be the source node. The reason for this terminology will
be explained soon. For ease of description (and without loss of
generality), the index of the source node is taken as $1$. The
DSSD algorithm is an iterative process consisting of two phases at
every discrete step: (i) \emph{State Update} and (ii) \emph{Cut
  Detection from State}.

DSSD PHASE I (\textbf{State update law)}: The scalar state
$x_{i}(k)$ assigned to node $v_{i}$ is iteratively updated (starting with
$x_{i}\left(  0\right)  =0$) according to the following update
rule. Recall that the index $i=1$ corresponds to the source node.
\begin{equation}
x_{i}(k+1)=\left\{
\begin{array}
[c]{c}%
\frac{1}{d_{1}(k)+1}\left(
{\displaystyle\sum\limits_{v\in\mathcal{N}_{1}(k)}}
x_{i}(k)+s\right) \\
\frac{1}{d_{i}(k)+1}%
{\displaystyle\sum\limits_{v\in\mathcal{N}_{i}(k)}}
x_{i}(k)
\end{array}
\right.
\begin{array}
[c]{c}%
i=1\\
i>1
\end{array}
. \label{eq:stateupdatelaw}%
\end{equation}
where $\mathcal{N}_{i}(k)$ is the set of neighbors
of $v_{i}$ in graph $\mathcal{G}(k)$ and
$d_{i}(k):=|\mathcal{N}_{i}(k)|$ is the degree of $v_{i}$ (number of neighbors) at time $k$, and the $s>0$ is an arbitrary fixed positive
number that is called the \emph{source strength}. The source strength
is a design parameter and has to be provided to the source node a-priori.

DSSD PHASE II (\textbf{Cut detection from state}): Every node $v_i$
maintains determines its connectivity to the source node by
comparing its state $x_i(k)$ to the \emph{cut detection threshold}
$\epsilon$ (a small positive number) as follows:
\begin{align}
  \label{eq:1}
  \mathrm{cut\_belief}_i(k) =
  \begin{cases}
    0 & x_i(k) > \epsilon \\
    1 & x_i(k) \leq \epsilon
  \end{cases}
\end{align}
where $\mathrm{cut\_belief}_i=1$ means the node believes it is
disconnected from the source and $0$ means it  believes it is
connected to the source.

The rationale for the algorithm comes from the interpretation of the
states in terms of the potentials in an electrical circuit. If the
network does not change with time, then the state of a node that is
connected to the source converges to positive number that is equal to
its electrical potential in a fictitious electrical network. If a node
is disconnected from the source then its state converges to $0$. When
the network is time-varying, then too the states can be shown to
converge in a mean-square sense to either positive numbers or $0$
depending on whether the node is connected or not (in some appropriate
stochastic sense) to the source. We discuss the details of the
electrical analogy and the theoretical performance guarantees that can
be provided for the proposed algorithm in the next section.

We note that the cut detection threshold $\epsilon$ is a design parameter, and it
has to be provided to all the nodes a-priori.  The value of
$\epsilon$ chosen depends on the source strength $s$. Smaller the
value of $s$, the smaller the value of  $\epsilon$ that has to be
chosen to avoid false separation detection.   We also note that the
algorithm as described above assumes that all updates are done synchronously, or, in
other words, every node shares the same iteration counter $k$. In
practice, the algorithm is executed asynchronously without requiring
any clock-synchronization or keeping a common time counter.
To achieve this, every node keeps a buffer of the last received states of its
neighbors. If a node does not receive messages from a neighbor during a
time-out period, it updates its state using the last successfully received
state from that neighbor. When a node does not receive broadcasts from one of
its neighbors for sufficiently long time, it removes that neighbor from its
neighbor set and carries on executing the algorithm with the remaining
neighbors. 

\section{Algorithm Explanation and Theoretical Results}\label{SECTION_theory}
When the graph does not change with time, i.e., $\G(k) = \G$ for some
$\G$, the state update law is an iterative method for computing the
node potentials in a fictitious electrical network $\G^e =
(\V^e,\E^e)$ that is constructed from the graph $\G$ as
follows. First, define $\V^e \eqdef V\cup\{g\}$ where $g$ is a
fictitious \emph{grounded node} $g$ and next, introduce $n$ additional
edges to connect each of the $n$ node in $\V$ to the node $g$ with a
single edge. So the edges in $\E^e$ consist of the edges in $\E$ and
the newly introduced edges. Now an electrical network $(\G^e, 1)$ is
imagined by assigning to every edge of $\G^e$ a unit
resistance. Figure~\ref{fig:example-network} shows a physical network
and the corresponding fictitious electrical network. It can be shown
that in a time invariant network, the node states resulting from the
state update law always converge to constants~\cite{PB_CDC:08}. In
fact, the limiting value of a node state in a graph $\G$ is the
potential of the node in the fictitious electrical network $\G^e$ in
which $s$ Ampere current is injected at the source node and is
extracted at a fictitious grounded node; which is always maintained at
potential $0$ by virtue of being grounded (see Theorem 1
of~\cite{PB_CDC:08}).

\begin{figure}[t]
\begin{center}
\psfrag{s}{$s$ A} \includegraphics[scale = 0.4]{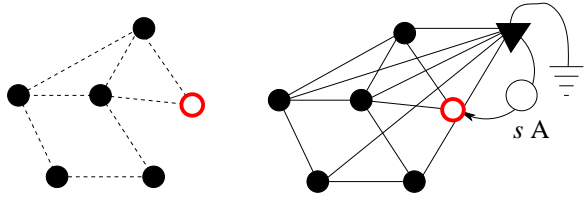}
\end{center}
\caption{A graph describing a sensor network (left), and the associated
electrical network (right). In the electrical network, one node is chosen as
the source that injects $s$ Ampere current into the network, and additional
nodes are introduced (fictitiously) that are grounded, through which the
current flows out of the network. The thick line segments in the electrical
network are resistors of $1 \Omega$ resistance. }%
\label{fig:example-network}%
\end{figure}

The evolution of the node states for a \emph{static network of nodes}
was analyzed in~\cite{PB_CDC:08}, where it was shown that for a time
invariant graph that is connected (and therefore every node is
connected to the source), the state of every node converges to a
positive number (see Theorem 1 of~\cite{PB_CDC:08}). For nodes that
are disconnected from the source, it was shown that their states
converge to $0$, and in fact this result holds even if the
component(s) of nodes that are disconnected from the source are
changing with time due to mobility etc. We state the precise result
below:
\begin{theorem}
{~\cite{PB_CDC:08}}\label{thm:cut-state-evol}
Let the nodes of a sensor network
with an initially connected undirected graph $\mathcal{G}=({\mathcal{V}%
},{\mathcal{E}})$ execute the DSSD algorithm starting at time $k=0$ with
initial condition $x_{i}(0)=0,\;i=1,\ldots,|\mathcal{V}|$. If a node
$v_i$ gets disconnected from the source at a time $\tau>0$ and stays
disconnected for all future time, then its state $x_i(k)$ converges to
$0$ as $k\to \infty$.
\end{theorem}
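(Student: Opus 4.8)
The plan is to show that, once $v_i$ is permanently cut off from the source, the states on its side of the cut evolve by an autonomous contraction and hence decay geometrically to $0$. First I would fix the structural picture. For $k \ge \tau$ let $\mathcal{D} \subseteq \mathcal{V}$ denote the set of nodes separated from the source, so that $v_1 \notin \mathcal{D}$, $v_i \in \mathcal{D}$, and no edge of $\mathcal{G}(k)$ joins $\mathcal{D}$ to $\mathcal{V}\setminus\mathcal{D}$ for any $k \ge \tau$. The point of this picture is that every neighbour of a node in $\mathcal{D}$ is again in $\mathcal{D}$, so the source-strength term $s$ (injected only at $v_1$) never enters the update of any node in $\mathcal{D}$; the states of the disconnected nodes therefore obey, for $k \ge \tau$, the closed recursion $x_j(k+1) = \frac{1}{d_j(k)+1}\sum_{v_l \in \mathcal{N}_j(k)} x_l(k)$ with $\mathcal{N}_j(k) \subseteq \mathcal{D}$.

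Next I would record two elementary facts by induction on $k$. Non-negativity: since $x_j(0)=0$ and each update is a nonnegative combination of nonnegative states (plus the nonnegative constant $s$ at the source), $x_j(k) \ge 0$ for all $j,k$. Strict mass deficiency: the coefficients in the update of node $v_j$ sum to $d_j(k)/(d_j(k)+1) < 1$. This deficiency is precisely the ``$+1$'' in the denominator of the update law, i.e.\ the unit edge to the grounded node held at potential $0$ in the electrical analogy, and it is the mechanism that drains the isolated region of its mass.

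I would then track the running maximum $M(k) := \max_{v_j \in \mathcal{D}} x_j(k)$ over the disconnected side. Because all neighbours of a node of $\mathcal{D}$ lie in $\mathcal{D}$ and their states are bounded by $M(k)$, each update obeys $x_j(k+1) \le \frac{d_j(k)}{d_j(k)+1} M(k)$, and since a node of $\mathcal{D}$ has degree at most $|\mathcal{D}|-1$ this gives the uniform bound
\begin{equation}
M(k+1) \le \rho\, M(k), \qquad \rho := \frac{|\mathcal{D}|-1}{|\mathcal{D}|} < 1. \nonumber
\end{equation}
Iterating yields $M(k) \le \rho^{\,k-\tau} M(\tau)$ for all $k \ge \tau$, and since $M(\tau)$ is finite and $0 \le x_i(k) \le M(k)$ by non-negativity, I conclude $x_i(k) \to 0$ as $k \to \infty$.

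The step I expect to be the main obstacle is the very first one: justifying rigorously, for a time-varying topology, that no freshly injected source value can ever reach $v_i$'s side of the cut. The contraction in the last step is completely insensitive to how the edges inside $\mathcal{D}$ are rearranged from step to step, but it collapses if a node still carrying source value is allowed to migrate into $\mathcal{D}$; the crux is therefore the bookkeeping that keeps $\mathcal{D}$ closed under the dynamics, i.e.\ reading ``stays disconnected'' as the persistence of a genuine cut with no edges across it rather than as merely instantaneous disconnection of the single vertex $v_i$. Once that invariance is secured, the sub-stochastic contraction makes the convergence immediate.
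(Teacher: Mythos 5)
The paper itself never proves this theorem --- it is imported wholesale from \cite{PB_CDC:08}, with only a pointer to that reference --- so there is no in-paper proof to compare yours against; what I can do is assess your argument on its merits and against the machinery the paper does develop. Your argument is correct and complete once the structural assumption is granted: with a fixed set $\mathcal{D}$ containing $v_i$ but not $v_1$, and with no edge of $\mathcal{G}(k)$ crossing between $\mathcal{D}$ and $\mathcal{V}\setminus\mathcal{D}$ for any $k \geq \tau$, the source term never enters any update inside $\mathcal{D}$; non-negativity of all states (induction from the zero initial condition) together with the strict sub-stochasticity of the update coefficients, whose sum $d_j(k)/(d_j(k)+1)$ is at most $(|\mathcal{D}|-1)/|\mathcal{D}|$ because every neighbor of a node of $\mathcal{D}$ again lies in $\mathcal{D}$, forces the running maximum over $\mathcal{D}$ to contract geometrically, uniformly over arbitrary rearrangements of the edges inside $\mathcal{D}$ (the degree-zero case, where the empty sum resets a state to $0$, is harmless). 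It is worth noting that your deterministic, sample-path argument is in one respect stronger than what the paper's own jump-linear-system apparatus yields for the mobile case: Theorem~\ref{thm:mobile-convergence} only concludes $\mu(u)=0$ \emph{in expectation} for nodes with no path to the source in the union graph, whereas your contraction gives convergence to zero of the state itself, at a geometric rate, for every realization of the graph sequence.

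The caveat you flag at the end is real, and it is the only substantive issue. Under the literal reading of the hypothesis --- $v_i$ lies in a different component of $\mathcal{G}(k)$ than $v_1$ at each individual $k \geq \tau$ --- the theorem is \emph{false}: the paper's own example in Figure~\ref{fig:Markov-graphs} exhibits a node ($4$) that is instantaneously disconnected from the source at every time, yet whose state stays bounded away from zero because node $2$ relays source value across time; this is precisely the ``connected in a time-average sense'' regime of Theorem~\ref{thm:mobile-convergence}. The intended reading, which the paper makes explicit in its discussion following Theorem~\ref{thm:mobile-convergence} (``permanently disconnected'' is equated with having no path to the source in the union graph), is disconnection in $\cup_{k\geq\tau}\mathcal{G}(k)$. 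Under that reading the set $\mathcal{D}$ you postulate always exists, and the bookkeeping you worry about is one sentence: take $\mathcal{D}$ to be the connected component of $v_i$ in the union graph $\cup_{k\geq\tau}\mathcal{G}(k)$; by maximality of a component, no edge of any $\mathcal{G}(k)$ with $k\geq\tau$ joins $\mathcal{D}$ to $\mathcal{V}\setminus\mathcal{D}$, which is exactly the invariance your contraction needs. With that sentence added, your proof stands.
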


This result is useful in detecting disconnection from the source; if
the state of node converges to $0$ (which can be determined by
checking if it becomes smaller than a threshold), then the node
detects that it is disconnected from the source. This partially explains the
logic behind the Phase II of the algorithm.

To detect connectivity to the source, we have to answer the question:
how do the states of nodes that are \emph{intermittently connected} to
the source due to their own -- or other nodes' -- mobility evolve? In
a mobile network the graph at any time instant is essentially random
since it depends on which nodes are within range of which other nodes
at that instant, which is difficult to predict in all but the most
trivial situations. We therefore model the evolution of the graph
$G(k)$ as a stochastic process. In that case the state of every node
also is a random variable, whose value depends on the evolution of the
graphs. Assuming the evolution of the graph can be described by a Markov
chain, we then show that the node states converge in the mean square
sense. Meaning, the mean and variance of each node's state converge to
specific values. We also provide formulas for these values.


We consider the case when the sequence of graphs
$\{\G(k)\}_{k=0}^{\infty}$ can be modeled as the realization of a
Markov chain, whose state space $\graphset \eqdef \{\G_1,\dots,\G_N\}$
is the set of graphs that can be formed by the mobile nodes due to
their motion. The network at time $k$ can be any one of the elements
of the set $\graphset$, i.e., $\G(k) \in \graphset$. The Markovian
property means that if $\G \in \graphset$, then $\Prob(\G(k+1) = \G |
\G(k)) = \Prob(\G(k+1) = \G | \G(k),\G(k-1),\dots,\G(0))$, where
$\Prob(\cdot)$ denotes probability. A simple example in which the time
variation of the graphs satisfies the Markovian property is that of a
network of static nodes with unreliable communication links such that
each link can fail temporarily, and the failure of each edge at every
time instant $k$ is independent of the failures of other links and the
probability of its failure is time-invariant. Another example is a
network of mobile agents whose motion obeys first order dynamics with
range-determined communication. Specifically, suppose the position of
node $v_i$ at time $k$, denoted by $p_i(k)$, is restricted to lie on
the unit sphere $\mathbf{S}^2 = \{ x \in \R^3| \|x\|=1\}$, and suppose
the position evolution obeys: $p_i(k+1) = f(p_i(k) + \Delta_i(k))$,
where $\Delta_i(k)$ is a stationary zero-mean white noise sequence for
every $i$, and $\Exp[\Delta_i(k) \Delta_j(k)^T] = 0$ unless
$v_i=v_j$. The function $f(\cdot): \R^3 \to \mathbf{S}^2$ is a
projection function onto the unit-sphere. In addition, suppose
$(v_i,v_j) \in \E(k)$ if and only if the geodesic distance between
them is less than or equal to some predetermined range. In this case,
prediction of $\G(k+1)$ given $\G(k)$ cannot be improved by the
knowledge of the graphs observed prior to $k$: $\G(k-1),\dots,\G(0)$,
and hence the change in the graph sequence satisfies the Markovian
property. If no restriction is placed on the motion of the nodes or edge
formation, the number of graphs in the set $\graphset$ is the total
number of distinct graphs possible with $n$ nodes. In that case, $N =
\displaystyle{2^{\frac{1}{2}n(n-1)}}$, where $N \eqdef
|\graphset|$. If certain nodes are restricted to move only within
certain geographic areas, $N$ is less than this maximum number.

We assume that the Markov chain that governs the evolution of the
graphs $\{G(k)\}_{k=0}^{\infty}$ is homogeneous, and denote the
transition probability matrix of the chain by $\MarkovP$. The
following result states under what conditions the node states converge
in the mean square sense and when the limiting mean values are
positive. The proof of the result is provided in Section~\ref{sec:proofs}. In the statement of theorem, the \emph{state vector} is the
vector of all the node states: $\mbf{x}(k) \eqdef
[x_1(k),\dots,x_{n}(k) ]^T$, and the \emph{union graph} $\hat{\G}
\eqdef \cup_{i=1}^{N}\G_i$ is the graph obtained by taking the union
of all graphs in the set $\graphset$, i.e., $\hat{\G} =
(\V,\cup_{i=1}^{N}\E_i)$.

\begin{theorem}\label{thm:mobile-convergence}
When the temporal evolution of the graph $\G(k)$ is governed by a
Markov chain that is ergodic, the state vector $\mbf{x}(k)$ converges in the mean
square sense. More precisely, for every initial condition $\mbf{x}(0)$, there exists
vector $\mu \in \R^n$ and a symmetric positive semi-definite matrix $\mbf{Q}$
so that $\Exp[\mbf{x}(k)] \to \mu$ and $\Exp[\mbf{x}(k) \mbf{x}(k)^T]
\to \mbf{Q}$. Moreover, the vector $\mu$ is entry-wise
non-negative. If $P$ is entry-wise positive, we have $\mu(i) > 0$
if and only if there is a path from node $v_i$ to the source node
$v_1$ (with index 1) in the union graph $\hat{\G}$.
\end{theorem}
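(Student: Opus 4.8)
The plan is to recast the update law \eqref{eq:stateupdatelaw} as an affine Markov-jump linear recursion and to exploit a uniform contraction of the associated iteration matrices. For each $\G_j \in \graphset$ we may write $\mbf{x}(k+1) = \mbf{P}_j\,\mbf{x}(k) + \mbf{b}_j$ whenever $\G(k)=\G_j$, where $(\mbf{P}_j)_{il} = 1/(d_i+1)$ if $v_l$ is a neighbor of $v_i$ in $\G_j$ and $0$ otherwise, and $\mbf{b}_j$ has its only nonzero entry $s/(d_1+1)$ in the source position. The first fact I would record is that each $\mbf{P}_j$ is nonnegative with row sums $d_i/(d_i+1)$; since no degree exceeds $n-1$, this gives the uniform bound $\|\mbf{P}_j\|_\infty \le 1 - 1/n =: \gamma < 1$ for every $j$. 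This single estimate is the engine of the argument: any product of the $\mbf{P}_j$ contracts geometrically at rate $\gamma$, irrespective of the graph sequence.

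To obtain convergence of the moments I would track the conditional moments $\mbf{m}_j(k) \eqdef \Exp[\mbf{x}(k)\,\mbf{1}_{\{\G(k)=\G_j\}}]$ and $\mbf{S}_j(k) \eqdef \Exp[\mbf{x}(k)\mbf{x}(k)^T\,\mbf{1}_{\{\G(k)=\G_j\}}]$. Using the Markov property (the graph transition at step $k$ is independent of $\mbf{x}(k)$ given $\G(k)$), these obey affine recursions whose homogeneous parts are the block operators with blocks $p_{ij}\mbf{P}_i$ for the mean and $p_{ij}(\mbf{P}_i\otimes\mbf{P}_i)$ for the second moment, $p_{ij}$ being the entries of $\MarkovP$. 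Equipping the stacked space with $\|\{\mbf{m}_j\}\|_* \eqdef \sum_j\|\mbf{m}_j\|_\infty$ and using $\sum_j p_{ij}=1$ together with $\|\mbf{P}_i\|_\infty\le\gamma$ (and $\|\mbf{P}_i\otimes\mbf{P}_i\|_\infty\le\gamma^2$), I would show these operators have spectral radius at most $\gamma$ and $\gamma^2$, hence are stable. Ergodicity gives $\Prob(\G(k)=\G_j)\to\pi_j^*$, so the forcing terms converge, and a stable affine recursion with convergent forcing converges; summing over $j$ yields $\Exp[\mbf{x}(k)]\to\mu$ and $\Exp[\mbf{x}(k)\mbf{x}(k)^T]\to\mbf{Q}$, with $\mbf{Q}$ symmetric positive semidefinite as a limit of the positive semidefinite $\sum_j\mbf{S}_j(k)$.

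Nonnegativity of $\mu$ is immediate: since $\mbf{x}(0)=0$, $\mbf{P}_j\ge 0$ and $\mbf{b}_j\ge 0$, induction gives $\mbf{x}(k)\ge 0$ on every sample path, so $\mu\ge 0$. For the equivalence I would treat the directions separately. If $v_i$ is not connected to $v_1$ in $\hat{\G}$, let $C$ be the component of $v_i$; since every $\E_j \subseteq \cup_l\E_l$, no edge of any $\G_j$ leaves $C$, so the states in $C$ evolve by averaging among themselves with no source term (as $v_1\notin C$), and $\mbf{x}(0)=0$ forces $x_i(k)=0$ for all $k$, whence $\mu(i)=0$. Conversely, given a path $v_i=w_0,w_1,\dots,w_r=v_1$ in $\hat{\G}$, I would propagate positivity backward along it in time: the source satisfies $x_1(k)\ge s/n$ for $k\ge 1$, and selecting the graph that makes edge $(w_t,w_{t+1})$ present at the appropriate step forces $x_i \ge s/n^{r+1}$ after $r$ steps. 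Because $\MarkovP$ is entry-wise positive, every such finite graph sequence has probability at least $p_{\min}^r>0$, with $p_{\min}\eqdef\min_{ij}p_{ij}$, uniformly in the starting time; since $x_i(k)\ge 0$ always, this gives $\Exp[x_i(k)]\ge p_{\min}^r\,s/n^{r+1}$ for all large $k$, and hence $\mu(i)>0$.

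The step I expect to be most delicate is establishing mean-square stability in the exact form required, namely converting the pathwise bound $\|\mbf{P}_j\|_\infty\le\gamma$ into spectral-radius bounds on the block moment operators while handling the coupling between the time-varying occupation probabilities $\Prob(\G(k)=\G_j)$ and the conditional moments; the choice of the norm $\|\cdot\|_*$ is what makes this clean. The other subtle point is the strict lower bound in the forward direction of the characterization, where one must exhibit a single positive-probability graph sequence carrying the injected current to $v_i$ and argue that its positive contribution persists in the limit; here the uniform-in-time lower bound on the occupation probabilities, guaranteed by positivity of $\MarkovP$, is exactly what is needed.
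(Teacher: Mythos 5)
Your proposal is correct, and although it shares the paper's overall framework---recasting \eqref{eq:stateupdatelaw} as a Markov jump linear system and exploiting the uniform contraction $\|J_j\|_\infty \le 1-1/n$ (the paper's Proposition~\ref{prop:Fi-infty-norm})---it replaces both of the paper's key technical steps with genuinely different arguments. For stability: your block moment operators with blocks $p_{ij}J_i$ and $p_{ij}(J_i\otimes J_i)$ are exactly the matrices $\mathcal{C}$ and $\mathcal{D}$ of \eqref{eq:definition_A_2}, but where the paper bounds $\rho(\mathcal{C})$ and $\rho(\mathcal{D})$ by invoking a cited block-spectral-radius result (Proposition~\ref{prop:JLS-MSS-estimate-radius}) plus a Perron--Frobenius comparison against the irreducible matrix $\MarkovP^T$ (Proposition~\ref{prop:rho_of_CD}), and then cites standard jump-linear-system theory for the moment recursions (Lemma~\ref{lem:ss-var}), you derive the conditional-moment recursions from scratch and prove contraction in the norm $\sum_j\|\cdot\|_\infty$ using only row-stochasticity of $\MarkovP$; this is more elementary and self-contained, and it makes visible that irreducibility is not needed for stability---ergodicity enters only through convergence of the occupation probabilities $\Prob(\G(k)=\G_j)$ in the forcing terms. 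For the characterization of when $\mu(i)>0$: the paper analyzes the sign structure of $(I-\mathcal{C})^{-1}=\sum_k\mathcal{C}^k$, interpreting positive entries as directed paths in an $Nn$-node ``graph of copies'' $\vec{\G}(\mathcal{C})$, and for the ``only if'' direction it defers the details to a technical report; your argument is instead probabilistic and pathwise---nonnegativity of all states by induction, the uniform bound $x_1(k)\ge s/n$, and a graph sequence of probability at least $(\min_{i,j}p_{ij})^r$ (using $\MarkovP\succ\!\succ0$, uniformly over starting times) that carries a mass of at least $s/n^{r+1}$ along the path to $v_i$---while the converse follows by noting that the union-graph component of a disconnected node contains no source and hence its states remain identically zero. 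Your treatment of this last part is in fact more complete than what is printed in the paper. One point to make explicit: your nonnegativity and positivity arguments fix $\mbf{x}(0)=0$, which suffices because the stable affine recursions you establish have limits independent of the initial condition; a sentence noting this would close the gap between your argument and the theorem's ``for every initial condition'' phrasing.
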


Theorems~\ref{thm:mobile-convergence}
and~\ref{thm:cut-state-evol} together explain the rationale behind
Phase II of the DSSD algorithm. The state of a node converges to $0$ if and
only if it is permanently disconnected from the source. If it is
connected to the source in the union graph, meaning it is connected in
a time-average sense (even if it is not connected in every time
instant), then the expected value of its state converges to a positive
number. As a result, a node can detect if it is connected to the
source or not simply by checking if its state has converged to $0$,
which is done by comparing the state to the threshold $\epsilon$.

\begin{figure}[t]
\begin{center}
\hspace{.5in} \subfigure[$\G_1$]{\includegraphics[scale= 0.5, clip
  =true, trim = 0in 0in -1in 0in]{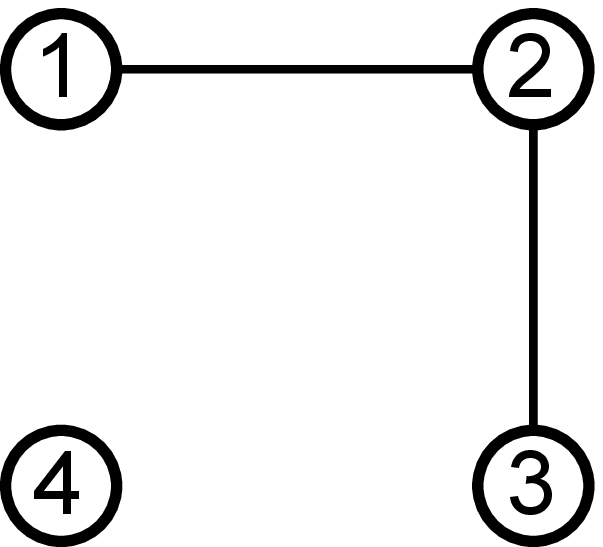}}
\subfigure[$\G_2$]{\includegraphics[scale= 0.5, clip
  =true, trim = -1in 0in 0in 0in]{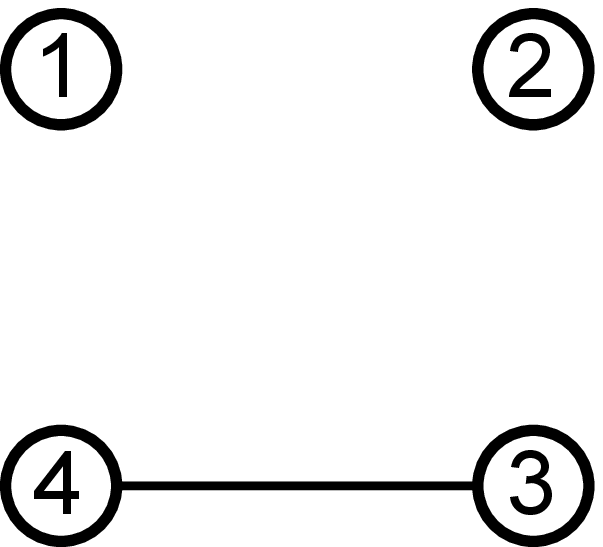}}
\end{center}
\caption{An example of two graphs $\G_1$ and $\G_2$ that appear at
  different times due to the motion of $4$ mobile nodes. If these two
  graphs form the set of all possible graphs that can ever appear,
  then $\graphset = \{\G_1,\G_2\}$. Note that neither $\G_1$ nor
  $\G_2$ is connected, but the union graph $\cup_{i}\G_i$ is. }
\label{fig:Markov-graphs}
\end{figure}

A closed-form expression for the limiting mean of the nodes states,
i.e., the vector $\mu$, and its correlation matrix $\mbf{Q}$, is also
provided in Lemma~\ref{lem:ss-var} in Section~\ref{sec:proofs}. We refrain from
stating them here as the expressions require
significant amount of terminology to be introduced.  The ergodic
property of the Markov chain assumed in
Theorem~\ref{thm:mobile-convergence} ensures that the chain has a
steady state distribution which is an entry-wise positive
vector. Intuitively, ergodicity of the chain means
that every graph in the set $\graphset$ appears infinitely often as
time progresses. In other words, the network $\G(k)$ does not get
stuck in one of the graphs in the set $\graphset$. As a result, if a
node is connected to the source in the union graph, even if it is not
connected to the source in any one of the graphs that can ever appear, there is
still a path for information flow between the source node and this node over
time. Figure~\ref{fig:Markov-graphs} shows a simple example in which
the network $\G(k)$ can be only one of the two graphs shown in the
figure. Node $4$ is disconnected from the source at every $k$, but it
is connected in a time-average sense since information from the source
node $1$ can flow to $2$ in one time instant when the graph $\G_1$
occurs and then from $2$ to $4$ in another time instant when $\G_2$
occurs. In a such a situation the theorem guarantees that the expected
value of the state of the node $4$ will converge to positive number.
Thus, node $4$ can detect that it is connected to the source in a
time-average sense. On the other hand, if a node is not connected to
the source in the union graph there is no path for information flow
between itself and the source over all time. This is equivalent to the
node being permanently disconnected from the source, so the result
that the mean of the node's state converges to $0$ is consistent with
the result of Theorem~\ref{thm:cut-state-evol}. The condition that
$\MarkovP$ is entry-wise positive means that there is a non-zero
probability that the network at time $k$ can transition into any one
of the possible graphs at time $k+1$, even though most of these
probabilities maybe small. We believe this sufficient condition is
merely an artifact of our proof technique, and in fact, this is not a
necessary condition for convergence to occur.

\begin{remark}
  \label{rem:speed} An advantage of the DSSD algorithm is that its
  convergence rate for a time-invariant network is independent of the
  number of agents in the network (Lemma 1 in~\cite{PB_CDC:08}). This
  makes the algorithm scalable to large networks. The
  convergence rate of mean and variance of the nodes states to their
  limiting values in the mobile case, however, requires further research.
\end{remark}

\subsection{Proof of Theorem~\ref{thm:mobile-convergence}}\label{sec:proofs}
We start with some notation. let $D(k)$ be the \emph{degree matrix}
of $\G(k)$, i.e., $D(k) \eqdef \diag(d_1(k),\dots,d_{n}(k))$. If node
$i$ fails at $k_0$, we assign $d_i(k) = 0$ and $\scr{N}_i(k)=\phi$
(the empty set) for $k \geq k_0$.  Let $A(k)$ be the \emph{adjacency
  matrix} of the graph $\G(k)$, i.e., $A_{i,j}(k) = 1$ if $(i,j)\in \E(k)$, and $0$ otherwise.
With these matrices,~\eqref{eq:stateupdatelaw} can be compactly written as:
\begin{align}\label{eq:state-update-law-vector}
  \mbf{x}(k+1) & = (D(k) + I)^{-1}\left(A(k) \mbf{x}(k) + s\;e_1\right)
\end{align}
where $e_1 =[1,0,\dots,0]^T$. Recall that the source node has
been indexed as node $v_1$. The above can be written as
\begin{align}\label{eq:update-vector-2}
\mbf{x}(k+1) & = J(k) \mbf{x}(k) + B(k)w(k)
\end{align}
where the matrices $J,B$ and the vector $w$ are defined as
\begin{align}\label{eq:J-B-w-def}
J(k) & \eqdef (D(k) + I)^{-1}A(k), & B(k) & = (D(k) + I)^{-1}, &
  w(k) & \eqdef s e_1.
\end{align}
Under the assumption that the temporal evolution of the graphs $\G(k)$
is governed by a Markov chain, we can
write~\eqref{eq:state-update-law-vector} in the standard notation of
\emph{jump-linear system}~\cite{CostaFragosoMarques:04}
\begin{align}\label{eq:update-law-JLS}
\mbf{x}(k+1) = J_{\G(k)} \mbf{x}(k) + B_{\G(k)} w(k)
\end{align}
 where $J_{\G(k)} = J(k)$, $B_{\G(k)} = B(k)$, and $w(k) = w=se_1$. This
 notation is used to emphasize that the state and input matrices $J$
 and $B$ of the linear system~\eqref{eq:update-law-JLS} change
 randomly, and the transition is governed by a Markov chain.

Let $\mu(k)\eqdef\Exp[\mbf x(k)]$, $\mathbf{Q}(k)\eqdef \Exp[\mbf
 x(k){\mbf x(k)}^T]$ be the mean and correlation of the state vector $\mbf{x}(k)$, respectively. We need the following definitions
 and terminology from~\cite{CostaFragosoMarques:04} to provide
 expressions for these quantities as well as to state conditions for
 the convergence of the mean and correlation. Let
 $\mathbb{R}^{m \times n}$ be the space of $m\times n$ real
 matrices. Let $\mathbb{H}^{m \times n}$ be the set of all N-sequences
 of real $m\times n$ matrices $Y_i\in \mathbb{R}^{m\times n}$. That
 is, if $Y\in \mathbb{H}^{m \times n}$ then $Y = (Y_1,Y_2,\dots,Y_N)$,
 where each $Y_i$ is an $m \times n$ matrix. The operators $\varphi$
 and $\hat{\varphi}$ are defined as follows: let ${(y_i)}_j\in \R^{m}$ be
 the $j$-th column of $Y_i \in \R^{m \times n}$, then
\begin{align}\label{eq:tall_matrix}
  \varphi(Y_i)& \eqdef\left(\begin{array}{c} {(y_i)}_1\\\vdots\\{(y_i)}_n\end{array}\right)
&\text{ and } & &\hat{\varphi}(Y) & \eqdef\left(\begin{array}{c}\varphi(Y_1)\\\vdots\\\varphi(Y_N)
\end{array}\right)
\end{align}
Hence, $\varphi(Y_i) \in \mathbb{R}^{mn}$ and
$\hat{\varphi}(Y)\in\mathbb{R}^{Nmn}$. Similarly, define an inverse
function $\hat{\varphi}^{-1}: \R^{Nmn} \to \mathbb{H}^{m \times n}$ that produces an element of
$\mathbb{H}^{m \times n}$ given a vector in $\R^{Nmn}$. For $X_i\in\mathbb{R}^{n
  \times n}$ for $i=1,\dots,N$, define
\begin{align}
diag[X_i]\eqdef\left(\begin{array}{ccc}
    X_1&\ldots&0\\\vdots&\ddots&\vdots\\0&\ldots &X_N
    \end{array}\right)\in \mathbb{R}^{Nn \times Nn}.
\end{align}
For a set of square matrices $C_{ij} \in \R^{m \times m}$,
$i,j = 1,\dots,N$, we also use the notation $C=[C_{ij}]$ to
denote the following matrix:
\begin{align*}
  C =[C_{ij}] \eqdef
\begin{bmatrix}
C_{11} & C_{12} & \dots & C_{1N} \\
C_{21} & C_{22} & \dots & C_{2N} \\
\dots & \dots & \dots & \dots \\
C_{N1} & C_{N2} & \dots & C_{NN}
\end{bmatrix} \in \R^{Nm \times Nm}
\end{align*}

\medskip

In context of the jump linear system~\eqref{eq:update-law-JLS}, define the matrices
\begin{align}\label{eq:definition_A_2}
 \mathcal{C}&\eqdef(\MarkovP^T\otimes I_n)diag[J_i]\in \mathbb{R}^{Nn}\nonumber\\
 \mathcal{D}&\eqdef(\MarkovP^T\otimes I_{n^2})diag[J_i\otimes J_i]\in \mathbb{R}^{Nn^2},
\end{align}
where $I_n$ is the $n\times n$ identity matrix, $\MarkovP$ is
the transition probability matrix of the Markov chain and $\otimes$
denotes Kronecker product. Note that the matrices $\mathcal{C}$ and
$\mathcal{D}$ can be expressed, using the notation introduced above, as
\begin{align}\label{eq:C-D-bracket}
  \mathcal{C} & = [p_{ji}J_j] & \mathcal{D} & = [p_{ji}F_j] \text {
    where } F_i \eqdef J_i \otimes J_i,
\end{align}
where $p_{ij}$ is $(i,j)$-th entry of $\MarkovP$ and $\pi \in \R^{1
  \times N}$ is the stationary distribution of the Markov chain, which
exists due to the ergodicity directly followed by assumption of
positive $\MarkovP$.

For a matrix $X$, we write $X\succeq 0$ to
mean that $X$ is entry-wise non-negative and write $X\succ 0$ to mean
$X\succeq 0$ and $X \neq 0$. If every entry of $X$ is positive, we
write $X \succ\!\succ 0$. For a matrix $X$, $X \geq (>) 0$ means it is
positive semi-definite (definite). For two matrices $X$ and $Y$ of
compatible dimension, we write $X \succeq Y$ if $X_{ij} \geq Y_{ij}$
for all $i$, $j$, and write $X \succ Y$ if $X \succeq Y$ and $X \neq
Y$. For a vector $x$, we write $x \succeq 0$ to mean $x$ is entry-wise
non-negative, $x \succ 0$ to mean $x$ is entry-wise non-negative and
at least one entry is positive, and $x \succ\!\succ 0$ to mean every
entry of $x$ is positive. The fact that both $J$ and $B$ are entry-wise non-negative will be
useful later.

We will also need the following technical results to prove Theorem~\ref{thm:mobile-convergence}.
\begin{proposition}[\cite{MQC_XZL:04}, Theorem 3.2]\label{prop:JLS-MSS-estimate-radius}
Let $C=[C_{ij}] \in \R^{mn \times mn}$ be a block $m \times m$ matrix, where
$C_{ij}$ are non-negative $n\times n$ matrices for all
$i,j=1,\dots,m$ and let $\tilde{C}=[\|C_{ij}\|]
\in \R^{m \times m}$, where $\| \cdot \|$ is either the induced
1-norm ($\|\cdot\|_1$) or the induced $\infty$-norm ( $\|\cdot\|_\infty$). Then $\rho(C) \leq \rho(\tilde{C})$.
\end{proposition}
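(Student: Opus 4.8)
The plan is to reduce everything to Gelfand's spectral radius formula, $\rho(M) = \lim_{k\to\infty} \|M^k\|^{1/k}$, which holds for any square matrix $M$ and any submultiplicative matrix norm. The strategy is to show that for every $k$ the induced norm of the $k$-th power of the full block matrix is dominated by the corresponding power of the reduced matrix, namely $\|C^k\| \leq \|\tilde{C}^k\|$, and then take $k$-th roots and let $k\to\infty$. Before starting I would record the three properties of the chosen norm (induced $1$-norm or induced $\infty$-norm) that drive the whole argument: the triangle inequality, submultiplicativity, and monotonicity with respect to the entry-wise order on non-negative matrices, i.e.\ $0 \preceq A \preceq B$ implies $\|A\| \leq \|B\|$. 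The last property is immediate for these two norms, since each is a maximum of row sums (respectively column sums) of the absolute entries.

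The second step is a purely formal block estimate. Expanding the block product gives $(C^k)_{ij} = \sum C_{i\ell_1} C_{\ell_1 \ell_2}\cdots C_{\ell_{k-1} j}$, where the sum runs over all length-$k$ index chains from $i$ to $j$. Applying the triangle inequality and then submultiplicativity term by term yields $\|(C^k)_{ij}\| \leq \sum \|C_{i\ell_1}\|\cdots\|C_{\ell_{k-1} j}\| = (\tilde{C}^k)_{ij}$, which is precisely the $(i,j)$ entry of the $k$-th power of $\tilde{C}$. The non-negativity hypothesis on the blocks $C_{ij}$ is what makes the reduced matrix $\tilde{C}=[\|C_{ij}\|]$ a genuine non-negative matrix, so that $\tilde{C}^k \succeq 0$ and the monotonicity property can be applied in the next step.

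The third step passes from these blockwise estimates to the norm of the full $mn\times mn$ matrix. For the induced $\infty$-norm, any scalar row of $C^k$ lying in block-row $i$ has absolute row sum equal to $\sum_j$ (the row sum of the block $(C^k)_{ij}$), which is bounded by $\sum_j \|(C^k)_{ij}\|_\infty \leq \sum_j (\tilde{C}^k)_{ij}$; taking the maximum over all rows gives $\|C^k\|_\infty \leq \|\tilde{C}^k\|_\infty$, where the final inequality uses monotonicity of the $\infty$-norm on the non-negative matrix $\tilde{C}^k$. The induced $1$-norm case is identical after exchanging the roles of rows and columns (equivalently, by transposing). Combining this with Gelfand's formula applied to both $C$ and $\tilde{C}$ gives $\rho(C) = \lim_k \|C^k\|^{1/k} \leq \lim_k \|\tilde{C}^k\|^{1/k} = \rho(\tilde{C})$, which is the claim.

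The main obstacle is the third step rather than the second: the blockwise bound $\|(C^k)_{ij}\| \leq (\tilde{C}^k)_{ij}$ follows mechanically from the triangle inequality and submultiplicativity, but relating the induced norm of the entire $mn\times mn$ matrix to the scalar entries of $\tilde{C}^k$ requires a norm whose block/row structure aligns with the scalar row structure of $\tilde{C}$. This alignment is exactly why the statement is restricted to the induced $1$- and $\infty$-norms; for a norm such as the spectral ($2$-) norm the clean row-sum bookkeeping above fails, and the inequality $\|C^k\|\le\|\tilde C^k\|$ need not hold termwise.
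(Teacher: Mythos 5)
The paper does not actually prove this proposition: it is imported verbatim from reference \cite{MQC_XZL:04} (Theorem 3.2 there) and used as a black box in the proof of Proposition~\ref{prop:rho_of_CD}, so there is no in-paper argument to compare yours against. Judged on its own, your proof is correct and complete: the blockwise bound $\|(C^k)_{ij}\| \leq (\tilde{C}^k)_{ij}$ via the triangle inequality and submultiplicativity is sound, the passage from blockwise bounds to $\|C^k\|_\infty \leq \|\tilde{C}^k\|_\infty$ (and its column-wise analogue for the $1$-norm) is exactly the right bookkeeping, and Gelfand's formula finishes it. One small remark: your claim that the non-negativity of the blocks $C_{ij}$ is ``what makes $\tilde{C}$ a genuine non-negative matrix'' is off --- $\tilde{C}$ has non-negative entries automatically because its entries are norms, and in fact your argument never uses non-negativity of the $C_{ij}$ at all (triangle inequality, submultiplicativity, and the row-sum characterization of $\|\cdot\|_\infty$ all hold for arbitrary real blocks). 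So you have actually proved a slightly more general statement than the one quoted; the non-negativity hypothesis is inherited from the source paper's setting (where the $J_i$ are non-negative) rather than being needed for this inequality. This is a cosmetic over-attribution, not a gap.
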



\begin{proposition}\label{prop:Fi-infty-norm}
Let $\G$ be an undirected graph with $n$ nodes, and let
$J=(D+I)^{-1}A$, where $D$ and $A$ are the degree and adjacency
matrices, respectively. $\|J\|_\infty <1$ and $\|J \otimes J \|_\infty <
1$.
\end{proposition}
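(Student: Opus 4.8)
The plan is to compute the induced $\infty$-norm of $J$ directly from its row sums and then exploit the behavior of the induced $\infty$-norm under Kronecker products. Recall that for a matrix $M$, $\|M\|_\infty = \max_i \sum_j |M_{ij}|$ is the maximum absolute row sum.

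First I would note that $(D+I)^{-1}$ is a well-defined diagonal matrix with diagonal entries $1/(d_i+1)$, even for a failed node where $d_i=0$, because adding the identity guarantees $d_i+1 \geq 1$ for every $i$. Hence the $(i,j)$ entry of $J$ is $J_{ij} = A_{ij}/(d_i+1) \geq 0$. Since row $i$ of the adjacency matrix $A$ contains exactly $d_i$ unit entries, the $i$-th row sum of $J$ is $\sum_j A_{ij}/(d_i+1) = d_i/(d_i+1)$. For every finite $d_i \geq 0$ we have $d_i/(d_i+1) < 1$ (this equals $0$ when $d_i=0$), so taking the maximum over $i$ yields $\|J\|_\infty = \max_i d_i/(d_i+1) < 1$, which is the first assertion.

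For the second assertion I would use the multiplicativity of the induced $\infty$-norm under the Kronecker product, namely $\|J \otimes J\|_\infty = \|J\|_\infty^2$. This can be verified in one line from the definition: the rows of $J \otimes J$ are indexed by pairs $(i,k)$, and the entry in position $((i,k),(j,l))$ is $J_{ij}J_{kl}$, so the row-$(i,k)$ sum factors as $\big(\sum_j |J_{ij}|\big)\big(\sum_l |J_{kl}|\big)$. Maximizing over $(i,k)$ gives $\|J\|_\infty \cdot \|J\|_\infty$. Combining with the first part, $\|J \otimes J\|_\infty = \|J\|_\infty^2 < 1$.

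There is no serious obstacle in this proof; the statement reduces to elementary row-sum bookkeeping. The only points that require a moment of care are (i) confirming that the degree-zero (failed-node) case causes no division problem and still produces a row sum strictly below $1$, and (ii) justifying the factorization of the Kronecker row sums, which I would either cite as the standard identity $\|A \otimes B\|_\infty = \|A\|_\infty \|B\|_\infty$ for induced norms or verify directly as above. The strictness of both inequalities rests entirely on the observation that $d_i/(d_i+1) < 1$ for all finite $d_i$, which is the key quantitative fact.
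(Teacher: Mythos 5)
Your proof is correct and follows essentially the same route as the paper: both arguments compute the row sums of $J$ as $d_i/(d_i+1)$ to conclude $\|J\|_\infty < 1$, and then invoke multiplicativity of the induced $\infty$-norm under the Kronecker product to get $\|J \otimes J\|_\infty = \|J\|_\infty^2 < 1$. The only cosmetic differences are that you verify the Kronecker identity inline (the paper cites it) and bound the row sums by $\max_i d_i/(d_i+1)$ directly rather than by $(n-1)/n$ as the paper does.
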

\begin{proof}
Based on examining the structure of $J$, we see that
$\displaystyle{J_{ij}=\frac{1}{d_i+1}1_{\{\scr{N}_i\}}(j)}$, where
$1_{\{A\}}(x)$ is $1$ if $x \in A$ and $0$ otherwise. Obviously, $J$ is non-negative matrix and so is $J \otimes J $. Meanwhile, each row
sum of $J$ is at most $n-1/n$, where $n$ is the number of
nodes. Therefore $\|J \|_\infty \leq \frac{n-1}{n}<1$, which leads to
$\|J \otimes J \|_\infty=\|J\|_\infty \|J \|_\infty   <1$, where the
equality is a result of properties of the Kronecker
product~\cite[Section 2.5]{AL_WS_04}.
\end{proof}

\begin{proposition}\label{prop:rho_of_CD}
If the temporal evolution of the graph $\G(k)$ is governed by a
Markov chain that is ergodic, then we have $\rho(\mathcal{C})<1$ and $\rho(\mathcal{D})<1$, where $\mathcal{C},\mathcal{D}$ are defined in~\eqref{eq:definition_A_2}.
\end{proposition}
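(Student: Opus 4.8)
The plan is to reduce each block spectral-radius bound to a scalar ($N\times N$) spectral-radius computation via Proposition~\ref{prop:JLS-MSS-estimate-radius}, and then to control that scalar problem using the row-stochasticity of $\MarkovP$ together with the uniform contraction estimate of Proposition~\ref{prop:Fi-infty-norm}. Throughout I would work with the induced $\infty$-norm, since that is the norm for which Proposition~\ref{prop:Fi-infty-norm} supplies the bounds $\|J_i\|_\infty<1$ and $\|J_i\otimes J_i\|_\infty<1$.

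First I would handle $\mathcal{C}$. In the block form $\mathcal{C}=[p_{ji}J_j]$ of~\eqref{eq:C-D-bracket}, each block $p_{ji}J_j$ is entry-wise non-negative, because $p_{ji}\geq 0$ and $J_j\succeq 0$. Hence Proposition~\ref{prop:JLS-MSS-estimate-radius} applies and gives $\rho(\mathcal{C})\leq\rho(\tilde{\mathcal{C}})$, where $\tilde{\mathcal{C}}\in\R^{N\times N}$ has entries
\begin{align*}
  \tilde{\mathcal{C}}_{ij}=\|p_{ji}J_j\|_\infty=p_{ji}\|J_j\|_\infty .
\end{align*}
Writing $\alpha\eqdef\max_{j}\|J_j\|_\infty$, Proposition~\ref{prop:Fi-infty-norm} yields $\alpha\leq\frac{n-1}{n}<1$, so $\tilde{\mathcal{C}}_{ij}\leq\alpha\,p_{ji}$ for all $i,j$; equivalently $\alpha\MarkovP^T\succeq\tilde{\mathcal{C}}\succeq 0$ entry-wise.

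Next I would invoke monotonicity of the spectral radius on the cone of non-negative matrices (a Perron--Frobenius fact): from $\alpha\MarkovP^T\succeq\tilde{\mathcal{C}}\succeq 0$ it follows that $\rho(\tilde{\mathcal{C}})\leq\rho(\alpha\MarkovP^T)=\alpha\,\rho(\MarkovP^T)=\alpha\,\rho(\MarkovP)$. Since $\MarkovP$ is a transition probability matrix its rows sum to $1$, whence $\rho(\MarkovP)=1$, and therefore $\rho(\mathcal{C})\leq\rho(\tilde{\mathcal{C}})\leq\alpha<1$. The argument for $\mathcal{D}=[p_{ji}F_j]$ with $F_j=J_j\otimes J_j$ is verbatim the same: the blocks are non-negative, and $\tilde{\mathcal{D}}_{ij}=p_{ji}\|J_j\otimes J_j\|_\infty=p_{ji}\|J_j\|_\infty^2\leq\alpha^2 p_{ji}$ by the Kronecker-norm identity already used in Proposition~\ref{prop:Fi-infty-norm}, so $\alpha^2\MarkovP^T\succeq\tilde{\mathcal{D}}\succeq 0$ and $\rho(\mathcal{D})\leq\alpha^2\,\rho(\MarkovP)=\alpha^2<1$.

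The two steps needing care are the imports from general theory: checking the non-negativity hypothesis of Proposition~\ref{prop:JLS-MSS-estimate-radius} on each block (immediate here), and justifying the entry-wise monotonicity $\alpha\MarkovP^T\succeq\tilde{\mathcal{C}}\Rightarrow\rho(\tilde{\mathcal{C}})\leq\rho(\alpha\MarkovP^T)$. I expect the latter to be the only genuinely delicate point, since it is where $\rho(\MarkovP)=1$ enters and converts the per-graph contraction of Proposition~\ref{prop:Fi-infty-norm} into a strict contraction of the aggregated operators $\mathcal{C}$ and $\mathcal{D}$. It is worth remarking that this chain of bounds uses only that $\MarkovP$ is row-stochastic and that each $J_i$ arises from an undirected graph; ergodicity is inherited from the hypothesis of Theorem~\ref{thm:mobile-convergence} but is not actually invoked in establishing the strict contraction.
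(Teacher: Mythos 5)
Your proof is correct, and it departs from the paper's at the decisive step, so it is worth comparing the two. Both arguments open identically: Proposition~\ref{prop:JLS-MSS-estimate-radius} reduces $\rho(\mathcal{C})$ and $\rho(\mathcal{D})$ to the spectral radii of the $N\times N$ comparison matrices $[p_{ji}\|J_j\|_\infty]$ and $[p_{ji}\|F_j\|_\infty]$. From there the paper compares these matrices with $\MarkovP^T$ itself: since $\|J_j\|_\infty<1$, one has $\MarkovP^T \succ [p_{ji}\|J_j\|_\infty]$ (entry-wise domination with inequality somewhere), and the strict conclusion $\rho([p_{ji}\|J_j\|_\infty]) < \rho(\MarkovP^T)=1$ is obtained from Corollary 1.5 of Berman and Plemmons, which requires the dominating matrix $\MarkovP^T$ to be \emph{irreducible} --- this is precisely where ergodicity enters the paper's proof, and it cannot be dropped there (for reducible $A$, $0 \preceq B \prec A$ does not force $\rho(B)<\rho(A)$). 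You instead extract the uniform contraction factor $\alpha \eqdef \max_j \|J_j\|_\infty \leq (n-1)/n < 1$ (well defined and strictly less than $1$ because $\graphset$ is finite), obtain the non-strict domination $0 \preceq [p_{ji}\|J_j\|_\infty] \preceq \alpha\MarkovP^T$, and then need only the \emph{weak} monotonicity of the spectral radius on non-negative matrices, which holds with no irreducibility assumption; this gives $\rho(\mathcal{C}) \leq \alpha < 1$ and, via $\|F_j\|_\infty = \|J_j\|_\infty^2$, $\rho(\mathcal{D}) \leq \alpha^2 < 1$. The trade-off is clear: the paper's route spends the ergodicity hypothesis to get a qualitative strict inequality, while your route shows the hypothesis is not needed for this proposition at all --- only row-stochasticity of $\MarkovP$ and the per-graph bound of Proposition~\ref{prop:Fi-infty-norm} are used, as you correctly observe --- and it additionally yields the explicit quantitative bounds $\rho(\mathcal{C}) \leq (n-1)/n$ and $\rho(\mathcal{D}) \leq ((n-1)/n)^2$, which is exactly the kind of estimate that would bear on the open convergence-rate question in Remark~\ref{rem:speed}. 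The one step you flag as delicate, the implication $0 \preceq B \preceq A \Rightarrow \rho(B)\leq\rho(A)$ for non-negative matrices, is indeed the correct thing to be careful about, but it is a standard Perron--Frobenius fact valid without irreducibility; irreducibility is needed only for the strict version that the paper's argument relies on.
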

\begin{proof}
Since $\rho(\mathcal{C})
=\rho([p_{ji}J_j])$ (see~\eqref{eq:C-D-bracket}), we obtain by applying
Proposition~\ref{prop:JLS-MSS-estimate-radius} that
 \begin{align*}
    \rho(\mathcal{C})  &\leq \rho(\left[\| p_{ji}J_j \|_\infty
    \right]) = \rho(\left[ p_{ji}\|J_j\|_\infty \right]),
\end{align*}
where the equality follows from $p_{ij}$'s being probabilities and therefore non-negative. Since $\|J \|<1$ (Proposition~\ref{prop:Fi-infty-norm}), it follows that $P^T
\succ [p_{ji}\|J_j\|_\infty]$. Since both $P^T$ and $[p_{ji}\|J_j\|_\infty]$ are
non-negative, and $P^T$ is irreducible (which follows from the ergodic
assumption of the Markov chain), it follows from Corollary 1.5
of~\cite[pg. 27]{BermanPlemmons:79} that $\rho([p_{ji}\|J_j\|_\infty])
< \rho(\MarkovP^T) = \rho(\MarkovP) = 1$, the last equality being a
property of a transition probability matrix. This proves that
$\rho(\mathcal{C})<1$.

To show that $\rho(\mathcal{D})<1$, since $\rho(\mathcal{D})
=\rho([p_{ji}F_j])$ (see~\eqref{eq:C-D-bracket}), we obtain by applying
Proposition~\ref{prop:JLS-MSS-estimate-radius} that
 \begin{align*}
    \rho(\mathcal{D})  &\leq \rho(\left[\| p_{ji}F_j \|_\infty
    \right]) = \rho(\left[ p_{ji}\|F_j\|_\infty \right]),
\end{align*}
where the equality follows from $p_{ij}$'s being probabilities and therefore non-negative. Since the scalars $\|F_j\|_\infty$
satisfy $\|F_j\|_\infty <1$ for each $j$ (see Proposition~\ref{prop:Fi-infty-norm}), it follows that $P^T
\succ [p_{ji}\|F_j\|_\infty]$. Since both $P^T$ and $[p_{ji}\|F_j\|_\infty]$ are
non-negative, and $P^T$ is irreducible (which follows from the ergodic
assumption of the Markov chain), it follows from Corollary 1.5
of~\cite[pg. 27]{BermanPlemmons:79} that $\rho([p_{ji}\|F_j\|_\infty])
< \rho(\MarkovP^T) = \rho(\MarkovP) = 1$, the last equality being a
property of a transition probability matrix. This proves that
$\rho(\mathcal{D})<1$.
\end{proof}

The proof of Theorem~\ref{thm:mobile-convergence} will require the
following result.
\begin{lemma}\label{lem:ss-var}
Consider the jump linear system~\eqref{eq:update-law-JLS} with the
an underlying Markov chain that is ergodic. If $\rho(\mathcal{D})<1$,
where $\mathcal{D}$ is defined in~\eqref{eq:definition_A_2},
then the state $\mbf{x}(k)$ of the system~\eqref{eq:update-law-JLS}
converges in the mean square sense, i.e., $\mu(k)
  \to \mu$ and $\mbf{Q}(k) \to Q$, where $\mu$ and $Q$ are given by
\begin{align}\label{eq:mu-Q}
  \mu & \eqdef\displaystyle\sum_{i=1}^{N}q^{(i)} \quad
\mathbf{Q} \eqdef\displaystyle\sum_{i=1}^{N}Q_i.
\end{align}
where
\begin{align*}
[{q^{(1)}}^T,\dots,{q^{(N)}}^T]^T = q & \eqdef (I-\mathcal{C})^{-1}\psi \quad (q \in \mathbb{R}^{Nn})\\
(Q_1,\ldots,Q_N) = Q & \eqdef \hat{\varphi}^{-1}\left(
  (I-\mathcal{D})^{-1}\hat{\varphi}(R(q))\right), \quad (Q \in \mathbb{H}^{n \times n})
\end{align*}
where
\begin{align*}
\psi &\eqdef [\psi_1^T, \dots, \psi_N^T]^T
\in \mathbb{R}^{Nn} && \text{ and } &\psi & \eqdef
\displaystyle\sum_{i=1}^{N}p_{ij}B_iw\pi_i \in \R^n \\
R(q) &\eqdef(R_1(q),\ldots,R_N(q))\in \mathbb{H}^{n \times n} && \text{
  and } & R_j(q) & \eqdef \displaystyle\sum_{i=1}^{N}p_{ij}(B_iww^T B_i^T\pi_i+J_iq^{(i)}
w^T B_i^T+B_iw {q^{(i)}}^TJ_i^T))\in \mathbb{R}^{n
  \times n},
\end{align*}
and $\mathcal{C}$ is defined in~\eqref{eq:definition_A_2}, $\pi_i$ is the $i$-th entry of the
steady state distribution of the Markov chain, and $J_i,B_i$ are the
system matrices in~\eqref{eq:update-law-JLS}. Moreover, $\mu \succeq 0$.
\end{lemma}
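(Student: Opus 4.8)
The plan is to follow the standard analysis of jump linear systems by tracking the \emph{conditional moments} of the state, i.e., the moments restricted to each mode of the Markov chain. Concretely, for each graph $\G_i \in \graphset$ I would define the conditional mean $q^{(i)}(k) \eqdef \Exp[\mbf{x}(k)\,\mbf{1}_{\{\G(k)=\G_i\}}]$ and the conditional correlation $Q_i(k) \eqdef \Exp[\mbf{x}(k)\mbf{x}(k)^T\,\mbf{1}_{\{\G(k)=\G_i\}}]$, so that $\mu(k)=\sum_{i=1}^N q^{(i)}(k)$ and $\mbf{Q}(k)=\sum_{i=1}^N Q_i(k)$. The point of conditioning is that, given $\G(k)=\G_j$, the system matrices $J_j,B_j$ are deterministic and $\mbf{x}(k)$ is conditionally independent of the next mode $\G(k+1)$ by the Markov property, so the relevant expectations factor cleanly.

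First I would derive the recursion for the conditional means. Substituting the update law~\eqref{eq:update-law-JLS} into the definition of $q^{(i)}(k+1)$, partitioning on $\G(k)=\G_j$, and using $\Prob(\G(k+1)=\G_i \mid \G(k)=\G_j)=p_{ji}$ together with the conditional independence noted above gives
\begin{align*}
q^{(i)}(k+1) = \sum_{j=1}^N p_{ji}\bigl(J_j\,q^{(j)}(k) + B_j w\,\pi_j(k)\bigr),
\end{align*}
where $\pi_j(k)\eqdef\Prob(\G(k)=\G_j)$. Stacking the $q^{(i)}(k)$ into $q(k)\in\R^{Nn}$, the linear part has block $(i,j)$ equal to $p_{ji}J_j$, which is exactly the matrix $\mathcal{C}=[p_{ji}J_j]$ of~\eqref{eq:C-D-bracket}; hence $q(k+1)=\mathcal{C}\,q(k)+\psi(k)$ with $\psi(k)\to\psi$ because $\pi_j(k)\to\pi_j$ by ergodicity. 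Since Proposition~\ref{prop:rho_of_CD} gives $\rho(\mathcal{C})<1$, the standard fact that a Schur-stable linear recursion driven by a convergent input converges to the fixed point of its limiting equation yields $q(k)\to(I-\mathcal{C})^{-1}\psi=q$, and therefore $\mu(k)\to\sum_i q^{(i)}=\mu$.

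For the conditional correlations I would expand $\mbf{x}(k+1)\mbf{x}(k+1)^T$ from~\eqref{eq:update-law-JLS} into its four bilinear terms and take conditional expectations exactly as above, producing
\begin{align*}
Q_i(k+1) = \sum_{j=1}^N p_{ji}\,J_j Q_j(k) J_j^T + \tilde R_i(k),
\end{align*}
where the forcing $\tilde R_i(k)$ collects the cross terms $J_j q^{(j)}(k) w^T B_j^T$, its transpose, and $B_j w w^T B_j^T\pi_j(k)$, and converges to $R_i(q)$ as $k\to\infty$. Applying the column-stacking operator $\varphi$ and the Kronecker identity $\varphi(J_j Q_j J_j^T)=(J_j\otimes J_j)\varphi(Q_j)=F_j\varphi(Q_j)$ turns the linear part into the block matrix $[p_{ji}F_j]=\mathcal{D}$, so that $\hat{\varphi}(Q(k+1))=\mathcal{D}\,\hat{\varphi}(Q(k))+\hat{\varphi}(\tilde R(k))$. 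Using the hypothesis $\rho(\mathcal{D})<1$ and the same convergent-input argument, $\hat{\varphi}(Q(k))\to(I-\mathcal{D})^{-1}\hat{\varphi}(R(q))$, i.e., $Q(k)\to Q$, giving $\mbf{Q}(k)\to\sum_i Q_i=\mbf{Q}$.

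Finally, for $\mu\succeq 0$ I would argue directly at the level of sample paths: since $J(k)$, $B(k)$, and $w=s\,e_1$ are all entry-wise non-negative and $\mbf{x}(0)=0$, the update~\eqref{eq:update-vector-2} maps $\mbf{x}(k)\succeq 0$ to $\mbf{x}(k+1)\succeq 0$, so $\mbf{x}(k)\succeq 0$ for every $k$ and every realization; taking expectations gives $\mu(k)\succeq 0$ and hence $\mu\succeq 0$. I expect the main obstacle to be the bookkeeping in the correlation recursion — keeping the index pairing $(i,j)$ consistent through the conditioning, the vectorization, and the definition of $\mathcal{D}$ — together with the mild technical point that the forcing terms are only asymptotically constant because $\pi_j(k)$ merely converges to $\pi_j$, which is precisely where ergodicity, rather than just Schur stability, enters.
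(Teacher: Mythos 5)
Your proposal is correct and reaches exactly the paper's formulas, but by a more self-contained route. For the convergence statement the paper's own proof is essentially a citation: it invokes \cite[Proposition 3.37]{CostaFragosoMarques:04} for the mean-square convergence and for the expressions for $\mu$ and $\mbf{Q}$, noting only that ergodicity supplies the stationary distribution $\pi$. Your conditional-moment recursions --- $q^{(i)}(k+1)=\sum_{j}p_{ji}\bigl(J_j q^{(j)}(k)+B_j w\,\pi_j(k)\bigr)$ and its second-moment analogue, vectorized via $\varphi(J_jQ_jJ_j^T)=F_j\varphi(Q_j)$ so that the stacked system matrices become exactly $\mathcal{C}=[p_{ji}J_j]$ and $\mathcal{D}=[p_{ji}F_j]$ --- are precisely the machinery behind that cited result, so your argument is a correct reconstruction of it rather than a new idea; what it buys is a self-contained proof and an explicit view of where ergodicity enters (it makes $\pi_j(k)\to\pi_j$, hence the forcing terms asymptotically constant) versus where Schur stability of $\mathcal{C}$ and $\mathcal{D}$ enters (the convergent-input fixed-point argument). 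The genuinely different step is the proof of $\mu\succeq 0$: the paper works on the limit formula, expanding $(I-\mathcal{C})^{-1}=\sum_{k\geq 0}\mathcal{C}^k\succeq 0$ (using $\rho(\mathcal{C})<1$ and $\mathcal{C}\succeq 0$) together with $\psi\succeq 0$ to conclude $q=(I-\mathcal{C})^{-1}\psi\succeq 0$, whereas you argue pathwise that non-negativity of $J(k)$, $B(k)$ and $w=se_1$ propagates $\mbf{x}(k)\succeq 0$ along every realization, so $\mu(k)\succeq 0$ for all $k$. Both are valid, and yours is more elementary; its only hidden dependence is on the initial condition ($\mbf{x}(0)\succeq 0$), which is harmless here because your own mean recursion shows the limit $\mu$ is the same for every $\mbf{x}(0)$ --- but that one-line remark is worth making explicitly, since the paper's algebraic argument is manifestly initial-condition free.
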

\begin{proof}
The first statement about mean square convergence follows from
standard results in jump linear systems, as do the expressions for the
mean and correlation; see~\cite[Proposition
3.37]{CostaFragosoMarques:04}. Note that the existence of the steady
state distribution $\pi$ follows from the ergodicity of the
Markov chain.

To show that $\mu$ is entry-wise non-negative, note that
since $\rho(\mathcal{C})<1$ (Proposition \ref{prop:rho_of_CD}), we
have $\mathcal{M} \eqdef (I - \mathcal{C})^{-1}
= \sum_{k=0}^\infty C^k$. Thus, $\mathcal{M} \succeq 0$ since $\mathcal{C}$ is
non-negative (which follows from the fact that $\MarkovP \succ 0$
and $J_i \succeq 0$'s). It follows from the expression for $\psi$ that it is also
non-negative vector. This shows that $q = (I - \mathcal{C})^{-1}\psi
\succeq 0$, which implies $\mu \succeq 0$.
\end{proof}

\medskip

Now we are ready to prove Theorem~\ref{thm:mobile-convergence}
\begin{proof}[Proof of Theorem~\ref{thm:mobile-convergence}]
It follows from Proposition~\ref{prop:rho_of_CD} that under the hypothesis of the
Markov chain being ergodic, we have $\rho(\mathcal{D})<1$. It then
follows from Lemma~\ref{lem:ss-var} that the state converges in the
mean square sense, which proves the first statement of the
theorem. Note that the limiting mean and correlation of the state is
also provided by Lemma~\ref{lem:ss-var}.

We already know from Lemma~\ref{lem:ss-var} that $\mu \succeq 0$. To
prove the last statement of the theorem, that $\mu(u)>0$ if and only
if there is a path between node $u$ and the source node $1$ in the
union graph, we have to look at the
structure of the tall vector $q$ in~\eqref{eq:mu-Q} more carefully, since $q$
completely determines $\mu$. With some abuse of notation, from now on the source
node will be referred to as node $1$ instead of $v_1$. Note that $\pi\succ\!\succ0$ which follows from ergodicity,
$\MarkovP\succ\!\succ 0$ by assumption, $B_i$ is a diagonal matrix
with positive diagonal entries for every $i$ (follows from its definition), and $w=se_1$, where $s>0$ and $e_1=[1,0,\dots,0]^T\in \R^n$. It is easy to
show now that the $\psi_j=a e_1$ for some $a>0$. Thus,
$\psi=a[{e_1}^T,\dots,{e_1}^T]^T\in\R^{Nn}$. Since
$\rho{\mathcal{(C)}}<1$ (Proposition \ref{prop:rho_of_CD}),
$\mathcal{M} \eqdef
(I-\mathcal{C})^{-1}=\sum\limits_{k=0}^\infty\mathcal{C}^k$. Now,
we express the matrix $\mathcal{M}$ in terms of its blocks:
$\mathcal{M}=[\mathcal{M}^{(ij)}]$, where $\mathcal{M}^{(ij)}$ are $n
\times n$ matrices. Then, $q$ can be rewritten as,
\begin{align*}
  q =
\begin{bmatrix}
	 q^{(1)}\\
	 q^{(2)} \\
    \vdots \\
	 q^{(N)}\\
\end{bmatrix}=
\begin{bmatrix}
\mathcal{M}^{(11)} & \mathcal{M}^{(12)} & \dots & \mathcal{M}^{(1N)} \\
\mathcal{M}^{(21)} & \mathcal{M}^{(22)} & \dots & \mathcal{M}^{(2N)} \\
\vdots & \vdots & \dots & \vdots \\
\mathcal{M}^{(N1)} & \mathcal{M}^{(N2)} & \dots & \mathcal{M}^{(NN)}
\end{bmatrix}
\begin{bmatrix}
	 ae_1\\
	 ae_1 \\
    \vdots \\
	 ae_1\\
\end{bmatrix}=a[\mathcal{M}^{(ij)}e_1], \quad (a>0)
\end{align*}

Therefore,
$q^{(i)}=\sum_{j=1}^{N}\mathcal{M}^{(ij)}e_1=\sum_{j=1}^{N}\mathcal{M}^{(ij)}_{:1}$,
where the subscript $:1$ denotes the first column of the corresponding
matrix.  Hence, the $u$-th entry of $q^{(i)}$ is
$q^{(i)}(u)=\sum_{j=1}^{N}\mathcal{M}^{(ij)}_{u1}$. Recall that $\mu
=\sum_{i=1}^{N}q^{(i)}$. Therefore $\mu(u)=0$ if and only if
$q^{(i)}(u)=0$ for $i=1,\dots,N$, which is also equivalent to
$\sum_{i=1}^{N}\sum_{j=1}^{N}\mathcal{M}^{(ij)}_{u1}=0$.

The subsequent discussion requires introducing directed graphs
associated with matrices. For every $\ell
\times \ell$ matrix $A$, define $\vec{\G}(A) = (\V,\vec{\E})$
be the directed graph corresponding to $A$ as follows: the node set
$V$ is the index set $\V = \{1,\dots,\ell\}$ and the edge set
is defined by $(i,j)\in \vec{\E}$ if and only if $A_{i,j} \neq 0$~\cite{CarlMeyer:01}.
It is a standard result in graph theory that the number of walks from
a vertex $i$ to vertex $j$ in a directed graph of length $r$ is the $(i,j)$-th
element of $A^r$, where $A$ is the adjacency matrix of the
graph~\cite[pp.~165]{GodsilRoyle_2001}. Since $\mathcal{M} =
\sum\limits_{k=0}^\infty\mathcal{C}^k$, it follows from the preceding discussion that the
$(i,j)$-th entry of $\mathcal{M}$ is positive if and only if there exists a path
from the vertex $i$ to vertex $j$ in the directed graph
$\vec{\G}{(\mathcal{C})}$. Note that the graph $\vec{\G}{(\mathcal{C})}$ contains $Nn$ nodes.  We can group $Nn$ nodes into $N$
clusters such that each cluster, containing $n$ nodes, can be thought
of as copies of the $n$ nodes in the sensor and robot network. To
prevent confusion between the vertices in
$\vec{\G}{(\mathcal{C})}$ and node set $\mathcal{V}$ of the
original network, we use $v^{(i)}$ to denote a node in the graph
$\vec{\G}{(\mathcal{C})}$  that is the $i$-th copy of the
node $v$ in $\mathcal{V}$, where $i=1,\dots,N$.

Therefore $\sum_{i=1}^{N}\sum_{j=1}^{N}\mathcal{M}^{(ij)}_{u1}=0$ is
equivalent to there being no directed path from any of the $u$'s
copies ($u^{(i)}, i=1,\dots,N$) to any of $1$'s copies ($1^{(i)},
i=1,\dots,N$) in the directed graph
$\vec{\G}(\mathcal{C})$. Otherwise, $q(u)>0$. Since existence of an
edge from $i$ to $j$ in $\vec{\G}(A)$ only
depends on whether the $i,j$-th entry of $A$ is non-zero, and does not
depend on the specific value of the entry, it is convenient to define
$\overline{A}$ be a matrix associated with the matrix $A$, such that
$\overline{A}_{ij}=1$ if $A_{ij} \neq 0$ and $\overline{A}_{ij}=0$ if $A_{ij} =0$. Since $\MarkovP \succ\!\succ 0$, we have
\begin{align*}
  \overline{\mathcal{C}} = [\overline{p_{ij}J_j}]=
\begin{bmatrix}
\overline{J_1} & \overline{J_2} & \cdots & \overline{J_N} \\
\overline{J_1} & \overline{J_2} & \cdots & \overline{J_N} \\
\vdots & \vdots & \cdots & \vdots \\
\overline{J_1} & \overline{J_2} & \cdots & \overline{J_N}
\end{bmatrix}_{Nn}.
\end{align*}
It can be seen in a straightforward manner upon examining the matrix $\overline{\mathcal{C}}$ that if
there is an edge between nodes $u$ and $v$ in the $i$-th graph $\G_i$,
i.e., $(u,v) \in \E^{(i)}$, then $(u^{(j)},v^{(i)})$ for all
$j=1,\dots,N$, and $(v^{(j)},u^{(i)})$ for all $j=1,\dots,N$, i.e.,
there are edges in $\vec{\G}(\mathcal{C})$ from all copies of $u$ to $v^{(i)}$,
the $i$-th copy of $v$, and from all copies of $v$ to $u^{(i)}$, the
$i$-th copy of $u$.

Now we will show that if an arbitrary node $u$ is connected to $1$ in
the union graph $\cup_{i=1}^{N}\G_i$, then there is a path from a copy
of $u$ to a copy of $1$ in the directed graph $\vec{\G}(\mathcal{C})$,
otherwise not. To see that this is the case, we first take an example:
consider a path of length $2$ from $u$ to $1$ in the union graph that
involves two edges in two distinct graphs: $(u,v) \in \E^{(2)}$ and
$(v,1) \in \E^{(1)}$. From the preceding discussion, we have that
$(v,1) \in \E^{(1)} \Rightarrow (v^{(1)},1^{(1)}), (v^{(2)},1^{(1)})
\in \vec{\E}(\mathcal{C})$, and $(u,v) \in \E^{(2)} \Rightarrow
(u^{(1)},v^{(2)}), (u^{(2)},v^{(2)}) \in \vec{\E}(\mathcal{C})$. Thus
a path from a copy of $u$ to a copy of $1$ in $\vec{\G}(\mathcal{C})$
is $p = \{ (u^{(1)},v^{(2)}),(v^{(2)},1^{(1)})\}$. This argument works
as long as there is a path from $u$ to $1$ in the union graph,
irrespective of how long the path is. This shows that $u$ is connected
to $1$ in the union graph, then there is a path from at least one of
its copies to one of $1$'s copies in the directed graph
$\vec{\G}(\mathcal{C})$, which means $q(u)>0$. If, however, $u$ is not
connected to $1$ in the union graph, we can show that there is no path
from any of $u$'s copies to any of $1$'s copies. This can be shown by
considering the set of all nodes that do not have paths to $1$ in the
union graph and the set of nodes that do separately;
see~\cite{CL_mobileDSSD_techreport:11} for details. This concludes the
proof of the last statement of the theorem.
\end{proof}

\section{Simulation Tests}\label{SECTION_Simulations}
The DSSD algorithm was tested in a $\text{MATLAB}^{\mathrm{TM}}$
simulation for a network consisting of $200$ agents initially deployed
in a unit square at random. Two agents can only establish direct
communication if their Euclidean distance is less than $0.11$. The
source strength and cut detection threshold was $s=5\times10^{5}$ and
$\epsilon=10^{-2}$, respectively. Since there is no existing prior
work on the problem of detecting separation in mobile networks that
can operate without multi-hop routing, we do not provide simulation
comparison with existing algorithms. Note that the solutions proposed
in~\cite{Hauspie03partitiondetection,HR_RW_JS_TZ_SIGCOM:04} require
routing between the nodes and the base station, which is challenging
in sensor and robotic networks in which the topology can change with
time quickly.

\subsection{Performance of DSSD in a static network}
The first set of simulations is conducted with $200$ static nodes (see
Figure~\ref{fig:graph4sim}(a)). The center node (symbolized by a triangle) is
the source node. Simulations are run in a synchronous manner and a neighbor
is removed from the list of neighbors of a node the first time it failed to
receive messages from that neighbor. At $k=100$ the nodes shown as red squares
in Figure~\ref{fig:graph4sim}(b) fail, leading to a cut in the network. Figure~\ref{fig:graph4sim}(c-d) show the time evolution of the states
(calculated using \eqref{eq:stateupdatelaw}) of the four nodes $u$, $v$, $w$,
and $z$. Node $v$ is the only one among the four that is separated from the
source after the cut occurs. Initially, the states of every node increase from
$0$ and then settle down to their steady state value. After the cut occurs,
the state of node $v$ decreases towards $0$. When the state of node $v$
decreases below the preset threshold $\epsilon$, it declares itself cut from
the source. This occurs at $k=133$, thus the delay between the occurrence of
the cut and its detection by $v$ is $33$ time-steps.

\begin{figure}[ptb]
\begin{center}
\psfrag{u}[][][1.5]{$u$} \psfrag{v}[][][1.5]{$v$}
\psfrag{xuk}[][][1]{states}
\psfrag{iterindex}{$k$ (iter.~index)}
\subfigure[ $\G$ before cut]{\includegraphics[width=0.27\textwidth]{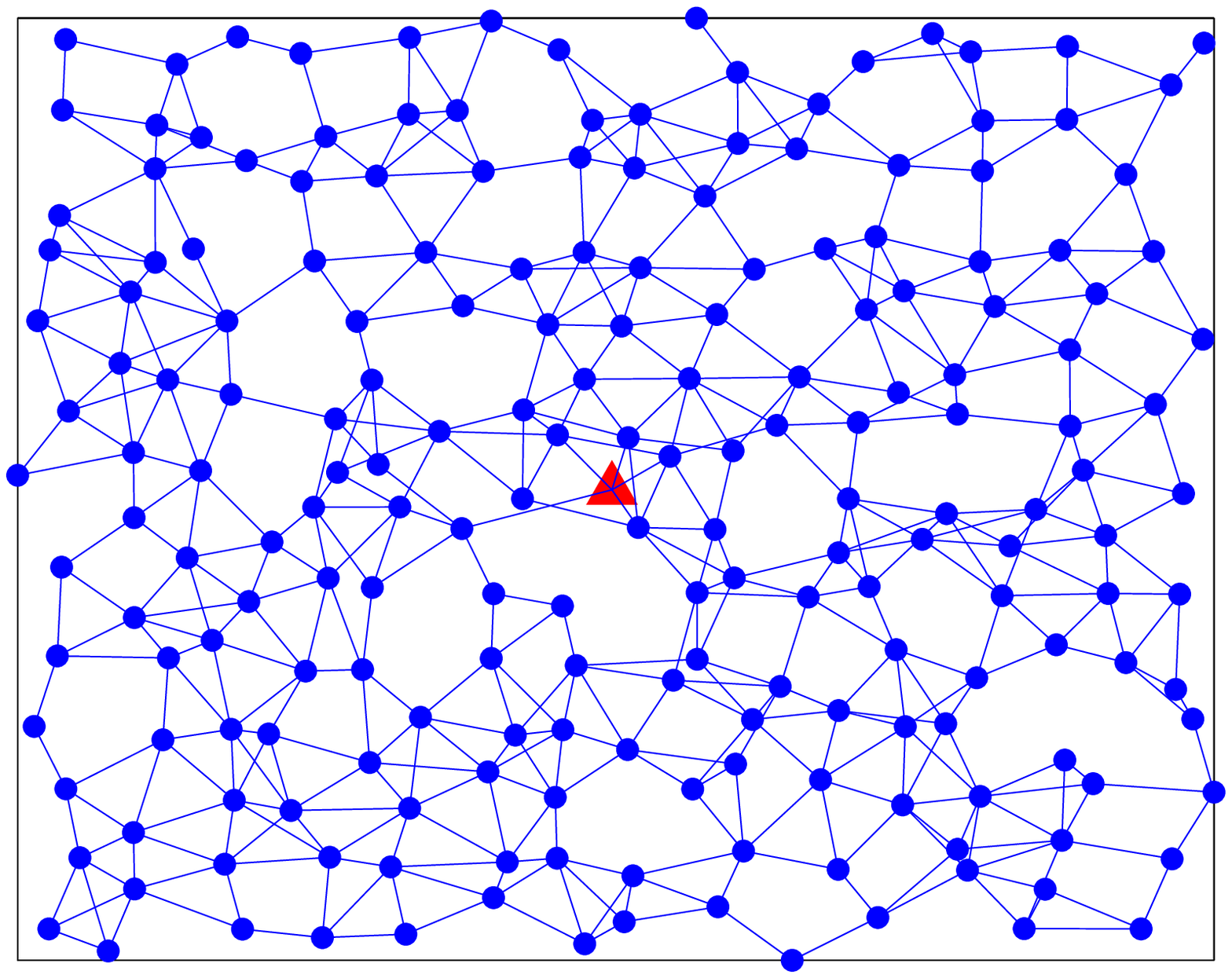}}
\subfigure[ $\G(k)$ for $k > 100$ ]{\includegraphics[width=0.33\textwidth]{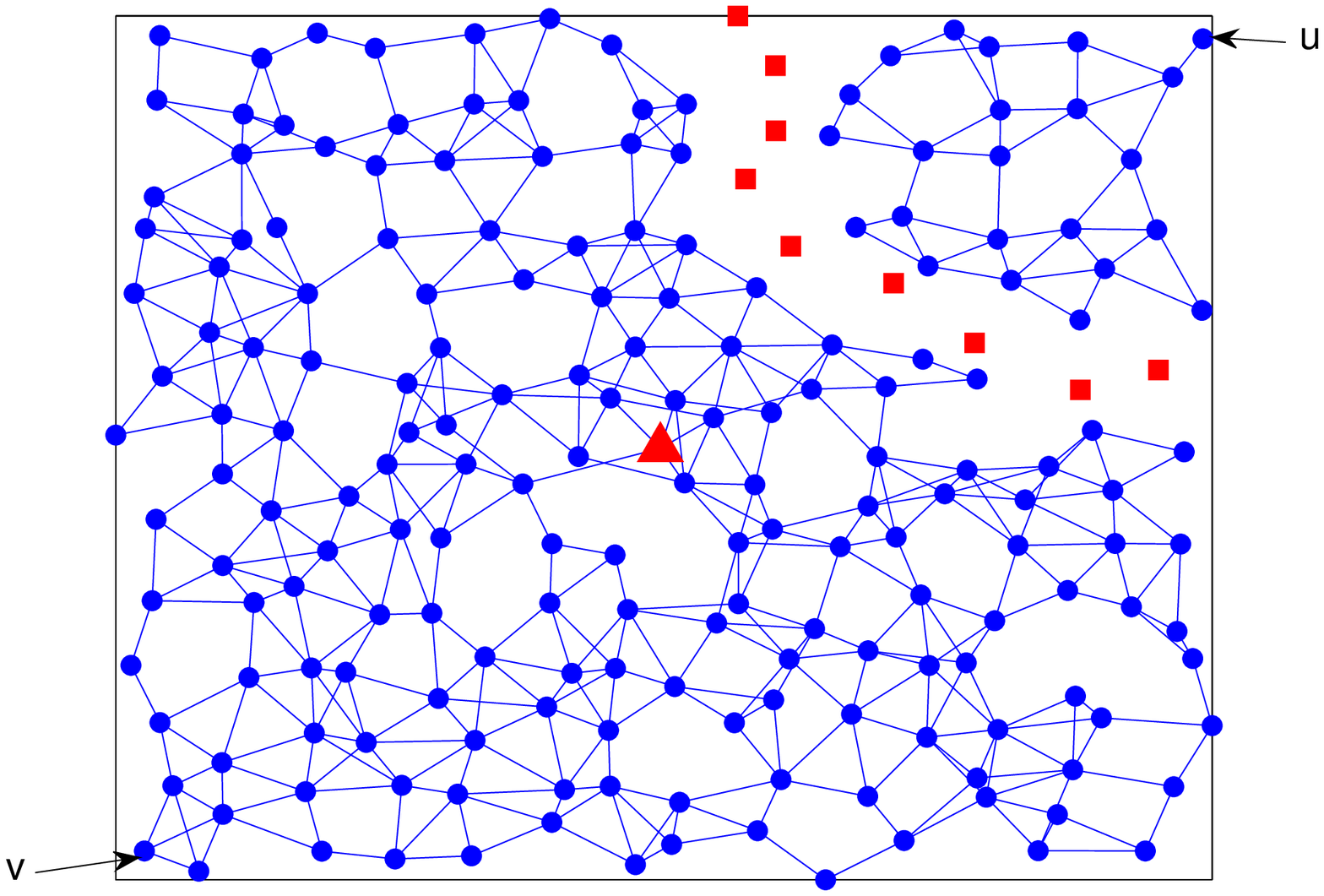}}
\subfigure[$x_u(k)$ and $x_v(k)$ vs. $k$]{\includegraphics[width=0.30\textwidth]{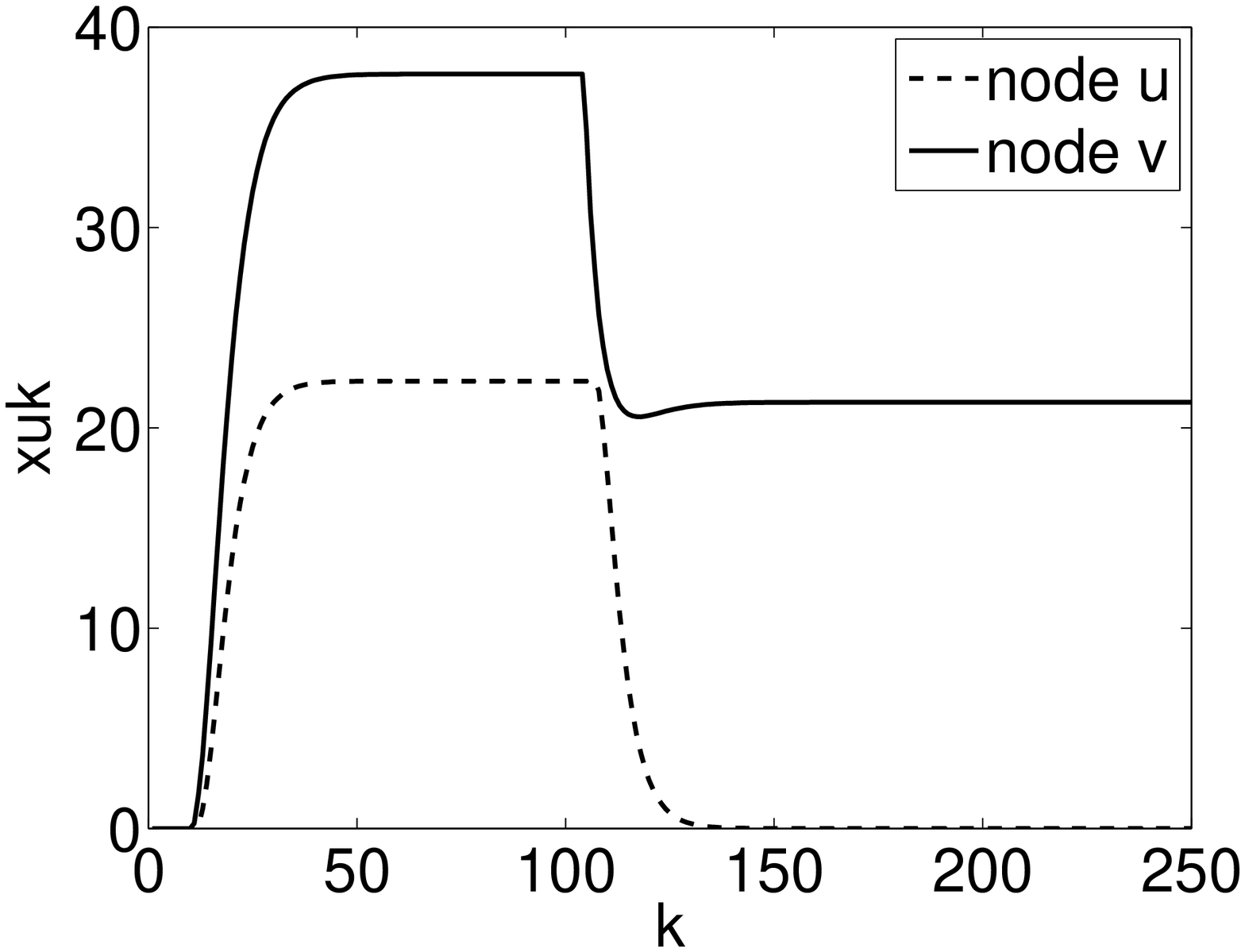}}
\end{center}
\caption{{\small {(a)-(b): A sensor network with $200$ static nodes, shown
before and after a cut occurs due to the failure of the nodes shown as red
squares. The cut occurs at $k=100$. (c): The states of two nodes $u$ and $v$
as a function of iteration number. The source node is at the center
(triangle), and the source strength is chosen as $s= 5 \times10^{4}$.}}}%
\label{fig:graph4sim}%
\end{figure}

\subsection{Performance of DSSD in a mobile network}
Figures~\ref{fig:mobile_graph4sim}(a-d) show four snapshots of a communication
network of $200$ mobile agents. The agents are divided into two groups, though there is no clear spatial separation between the two groups initially. The position of agent $u$, denoted by $Z_{u}$, is updated according to:
\begin{equation}
Z_{u}(k+1)=Z_{u}(k)+\left[
\begin{array}
[c]{c}%
\delta Z_{ux}(k)\\
\delta Z_{uy}(k)
\end{array}
\right]
\end{equation}
where $\delta Z_{ux}(k)$, $\delta Z_{uy}(k)$, for every $u$ and $k$, are
independent random numbers. For agents in the first group, both $\delta Z_{ux}$
and $\delta Z_{uy}$ are normally distributed with mean $0.003$ and variance
$0.0003$. For the second group, $\delta Z_{ux}$, $\delta Z_{uy}$ are normally
distributed with mean $-0.003$ and variance $0.0003$. The motion of the agents
results in the network composed of two disjoint components at $k=28$, four
components at $k=56$, and then again two components at $k=80$.

\begin{figure}[t]
\begin{center}
\psfrag{i}{$i$} \psfrag{j}{$j$} \psfrag{p}{$p$} \psfrag{q}{$q$}
\psfrag{iterindex}{$k$ (iter.~index)}
\hspace{1 cm}\subfigure[ $\G$ at $k= 24$ ]{\includegraphics[scale=0.3]{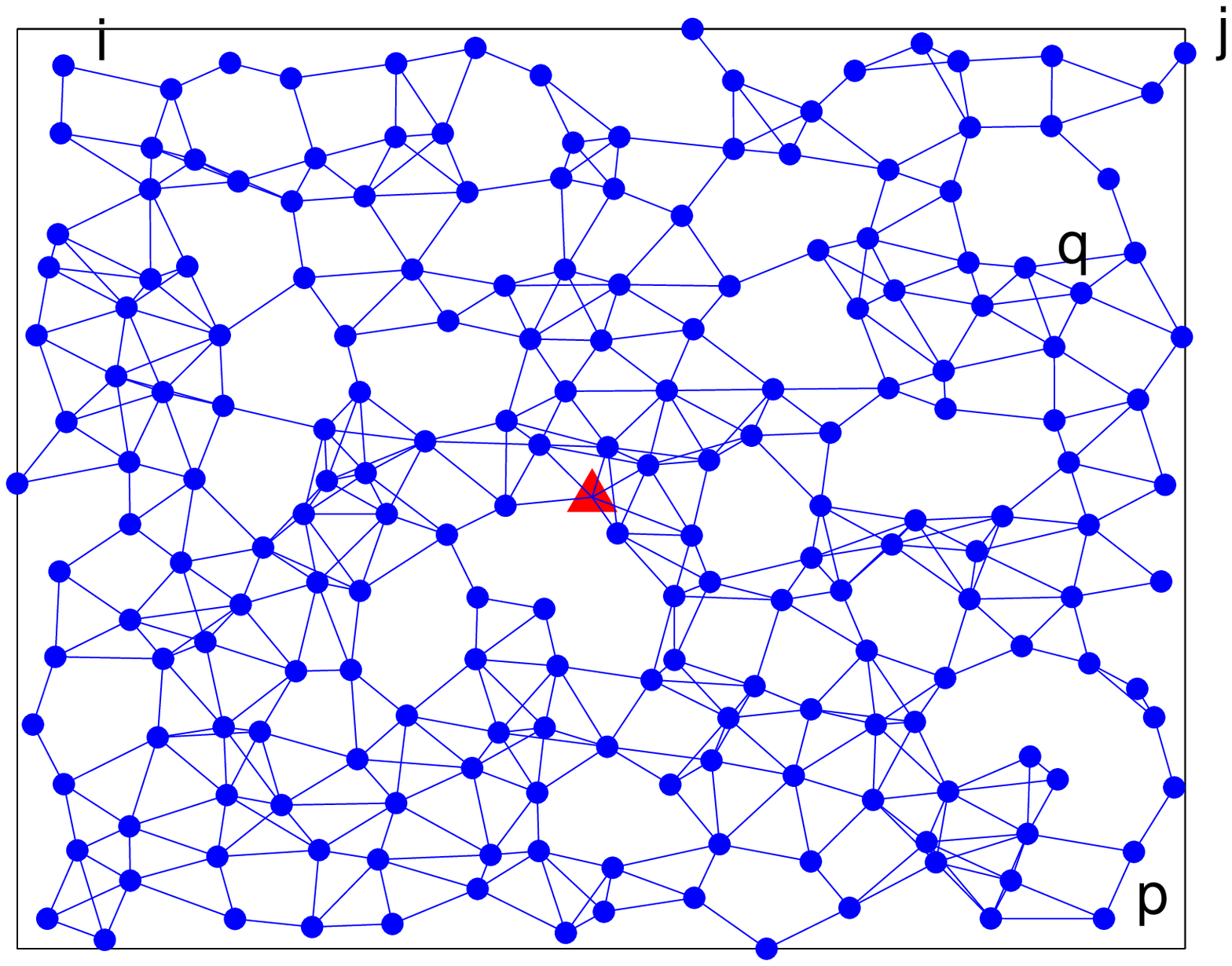}}\hspace
{8 ex}
\subfigure[ $\G(k)$ for $k = 29 $ ]{\includegraphics[scale=0.3]{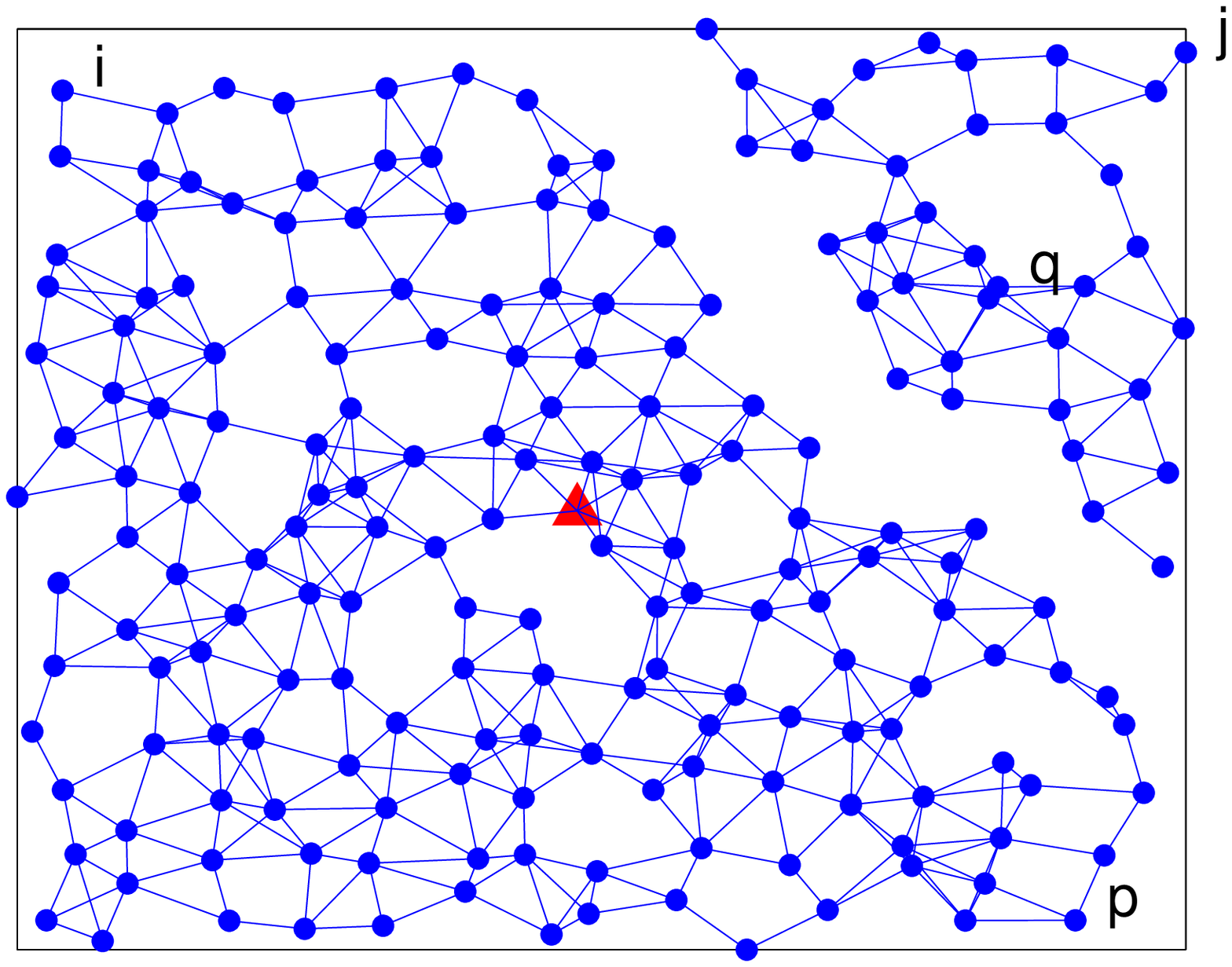}}\newline%
\vspace{- 2ex}
\hspace{1 cm}\subfigure[ $\G(k)$ for $k = 56 $ ]{\includegraphics[scale=0.3]{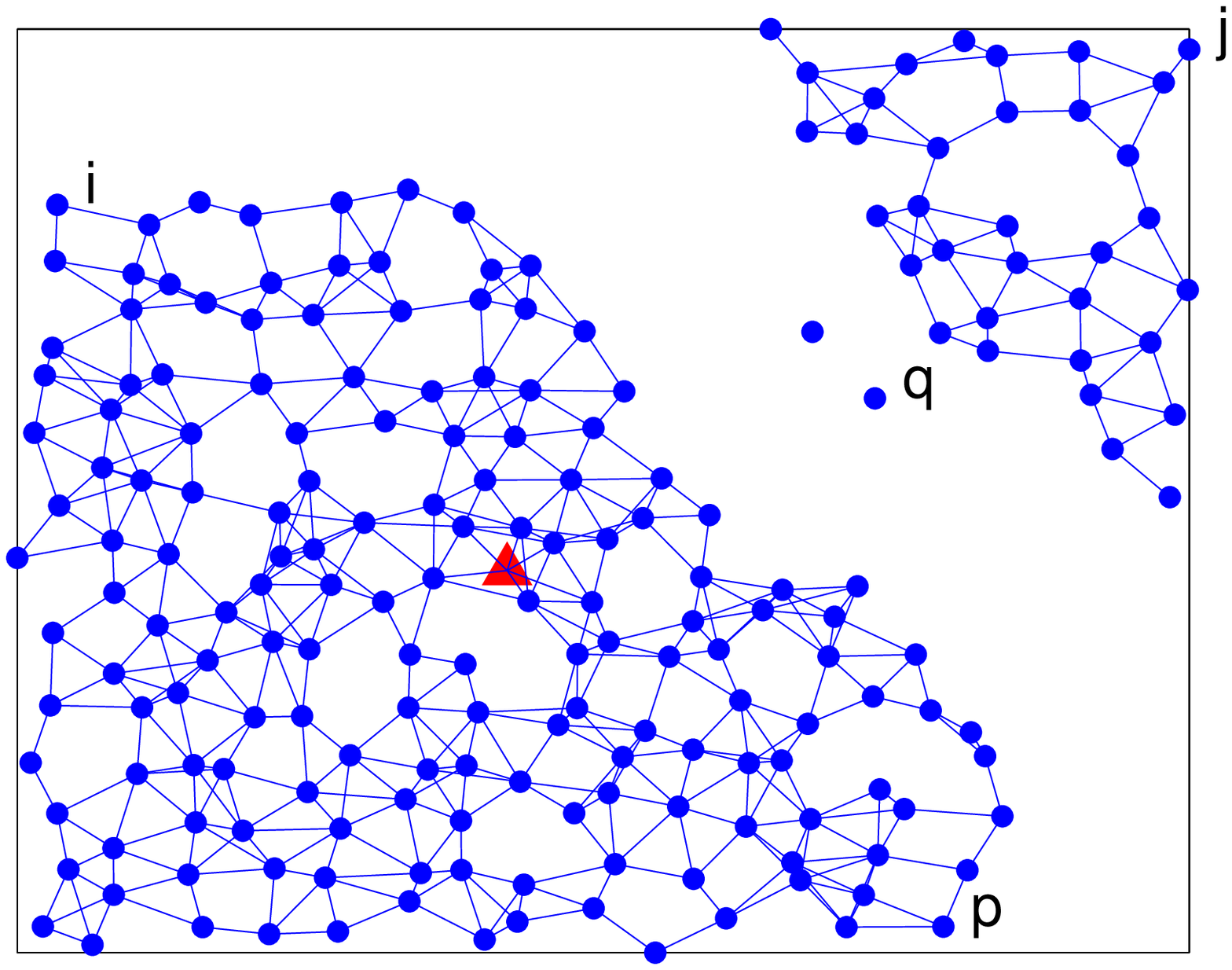}}\hspace
{8 ex}
\subfigure[ $\G(k)$ for $k = 99 $ ]{\includegraphics[scale=0.3]{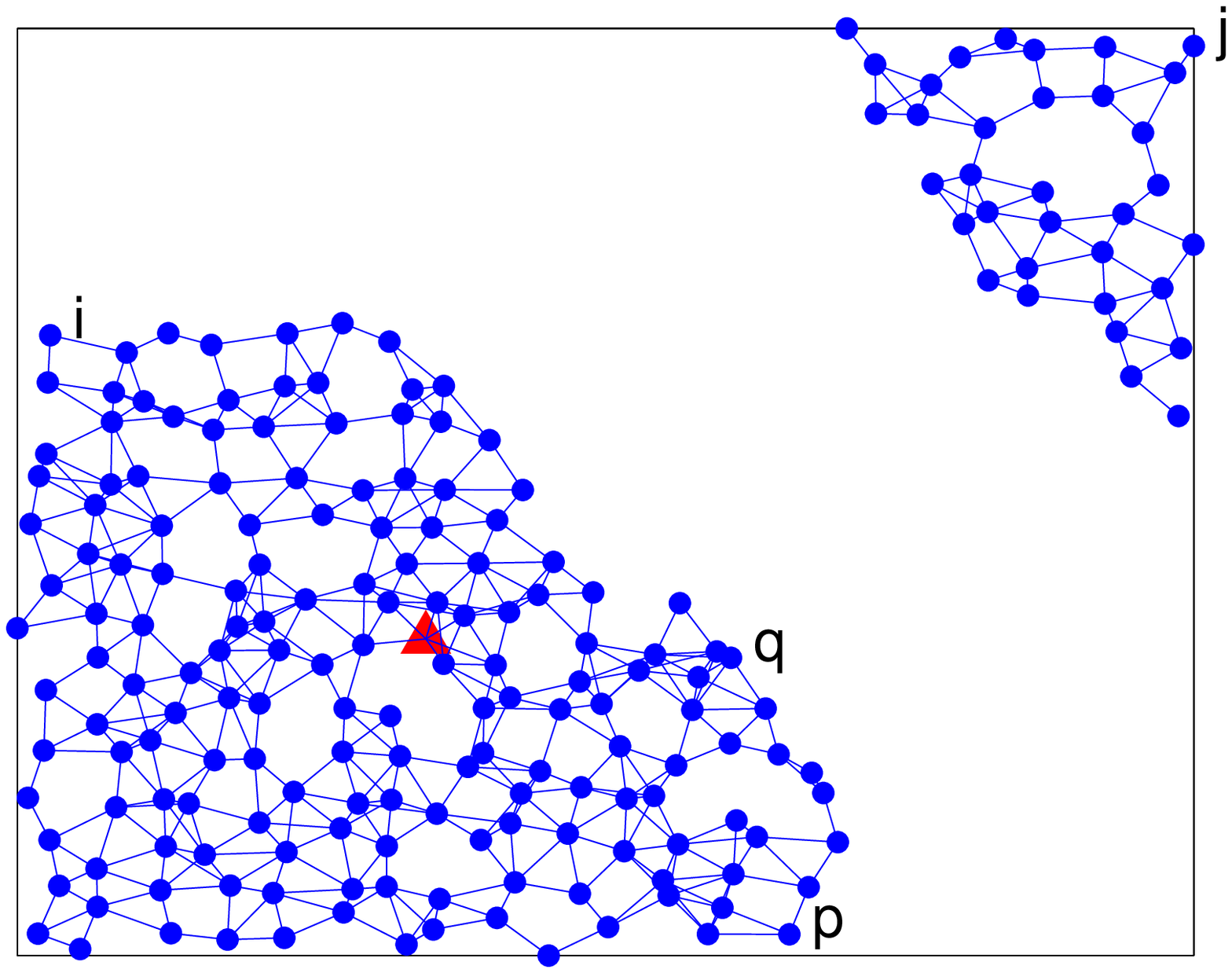}}\newline
\end{center}
\caption{Four snapshots of a network of $200$ mobile agents.}%
\label{fig:mobile_graph4sim}%
\end{figure}

The evolution of the states of four agents $i$, $j$, $p$, and $q$ are
shown in Figure~\ref{fig:mobile_sim_results}(a-b). The loss of connectivity of agent
$q$ from the source occurs at $k=28$ and is detected at $k=55$. Connectivity
to the source is regained at $k=80$ and is detected at $k=81$ (when the
states became greater than $\epsilon$).
\begin{figure}
\centering
\subfigure[states of node $i$ and $j$ ]{\includegraphics[scale=0.3]{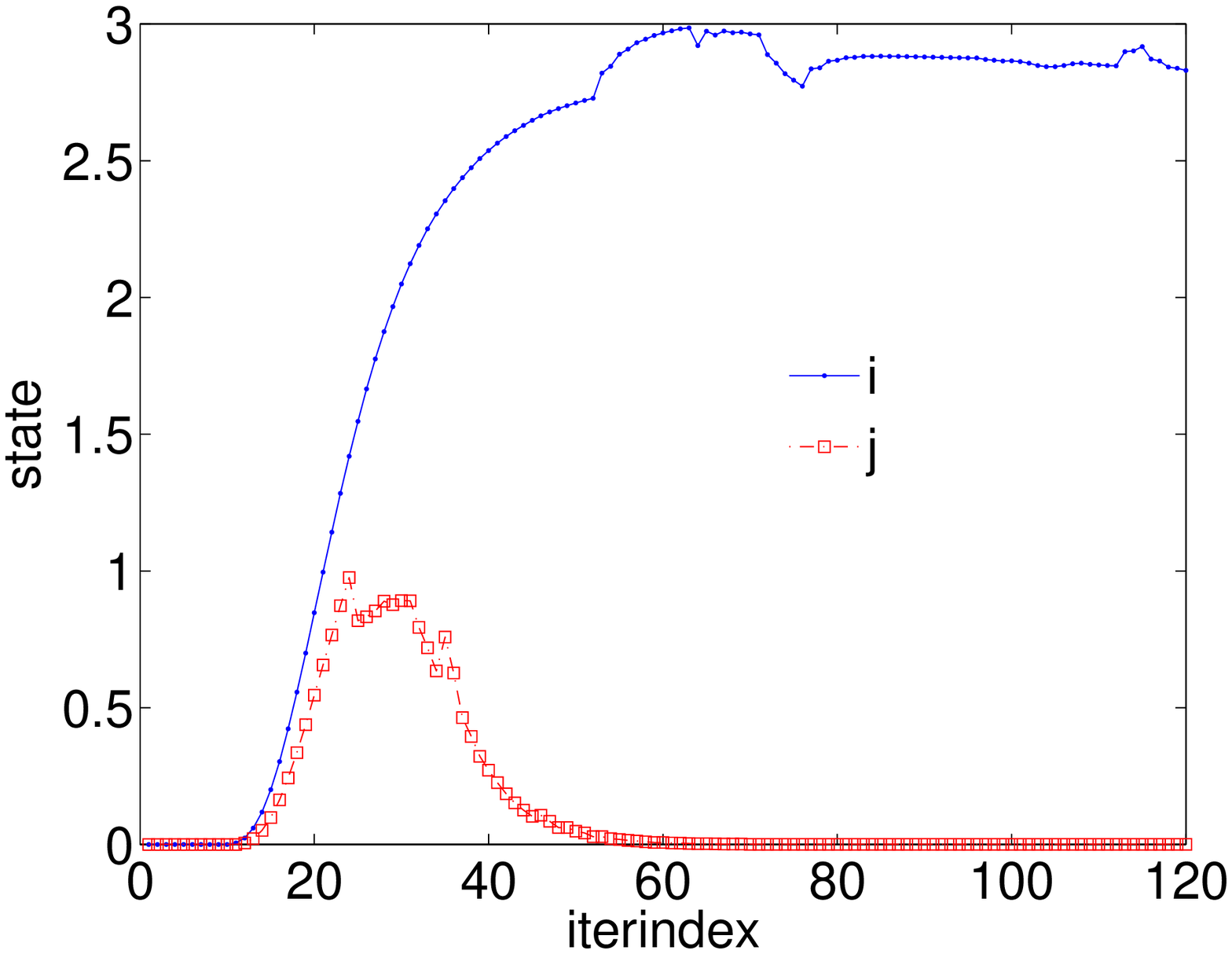}}
\subfigure[states of node $p$ and $q$ ]{\includegraphics[scale=0.3]{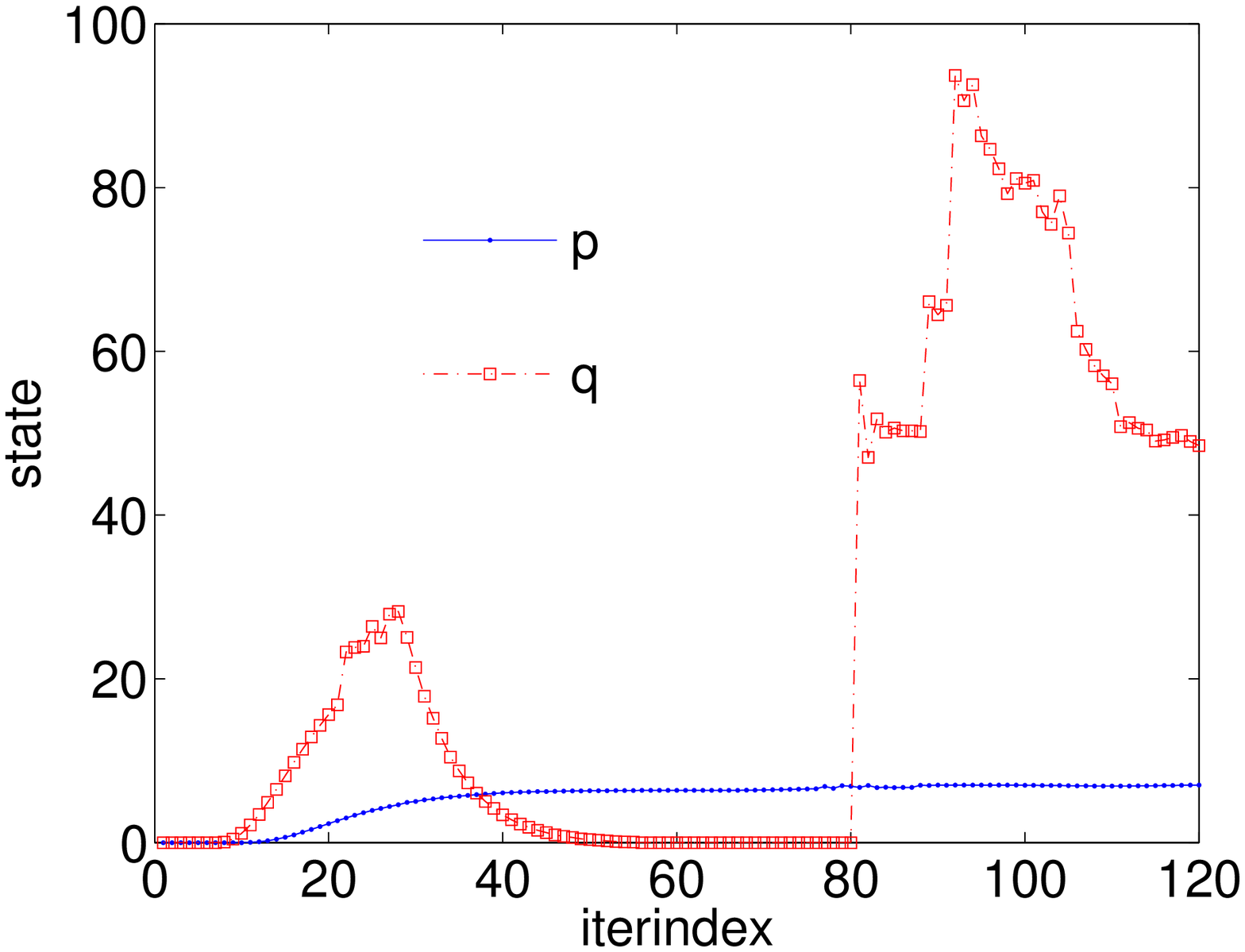}}
\caption{The states of four mobile nodes $i,j,p,q$ (as a function of time) in the network
  shown in Figure~\ref{fig:mobile_graph4sim}. }%
\label{fig:mobile_sim_results}%
\end{figure}
These simulations provide evidence that the algorithm is indeed
effective in detecting disconnections and re-connections, irrespective
of whether the network is made up of static or mobile agents.

\section{System Implementation and Experimental Evaluation}\label{sec:system_impl}

In this section we describe the implementation, deployment and performance evaluation of a separation detection system for robotic sensor networks based on the DSSD algorithm. We implemented the system, using the nesC language, on Berkeley motes~\cite{crossbow03manual} running the TinyOS operating system~\cite{hill00tinyos}. The code uses 16KB of program memory and 719B of RAM. The separation detection system executes in two phases: Reliable Neighbor Discovery, and the DSSD algorithm.

\begin{figure}[t]
\centering
\includegraphics[width=.6\textwidth]{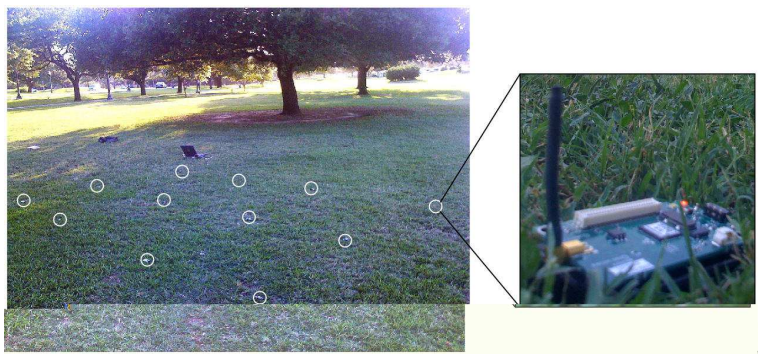}
\caption{Partial view of the 24 node outdoor system deployment.}
\label{fig:deployment}
\end{figure}

\begin{figure}[t]
\begin{center}
\subfigure[The graph]{\includegraphics[scale = 0.6]{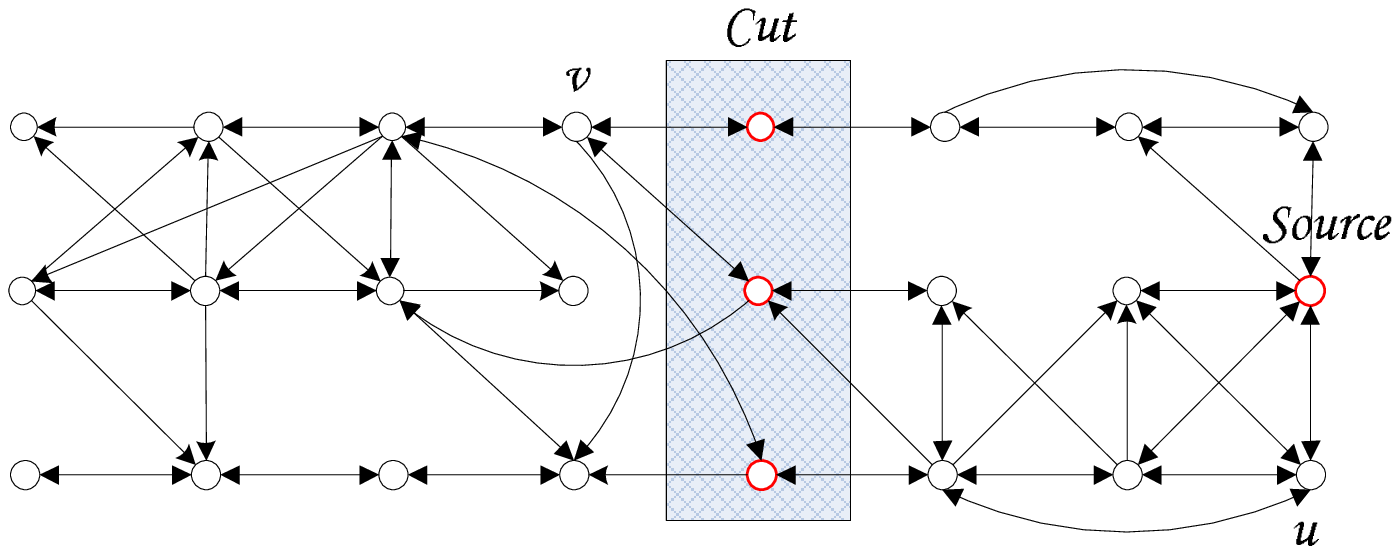}}
\subfigure[State histories]{\includegraphics[scale = 0.23]{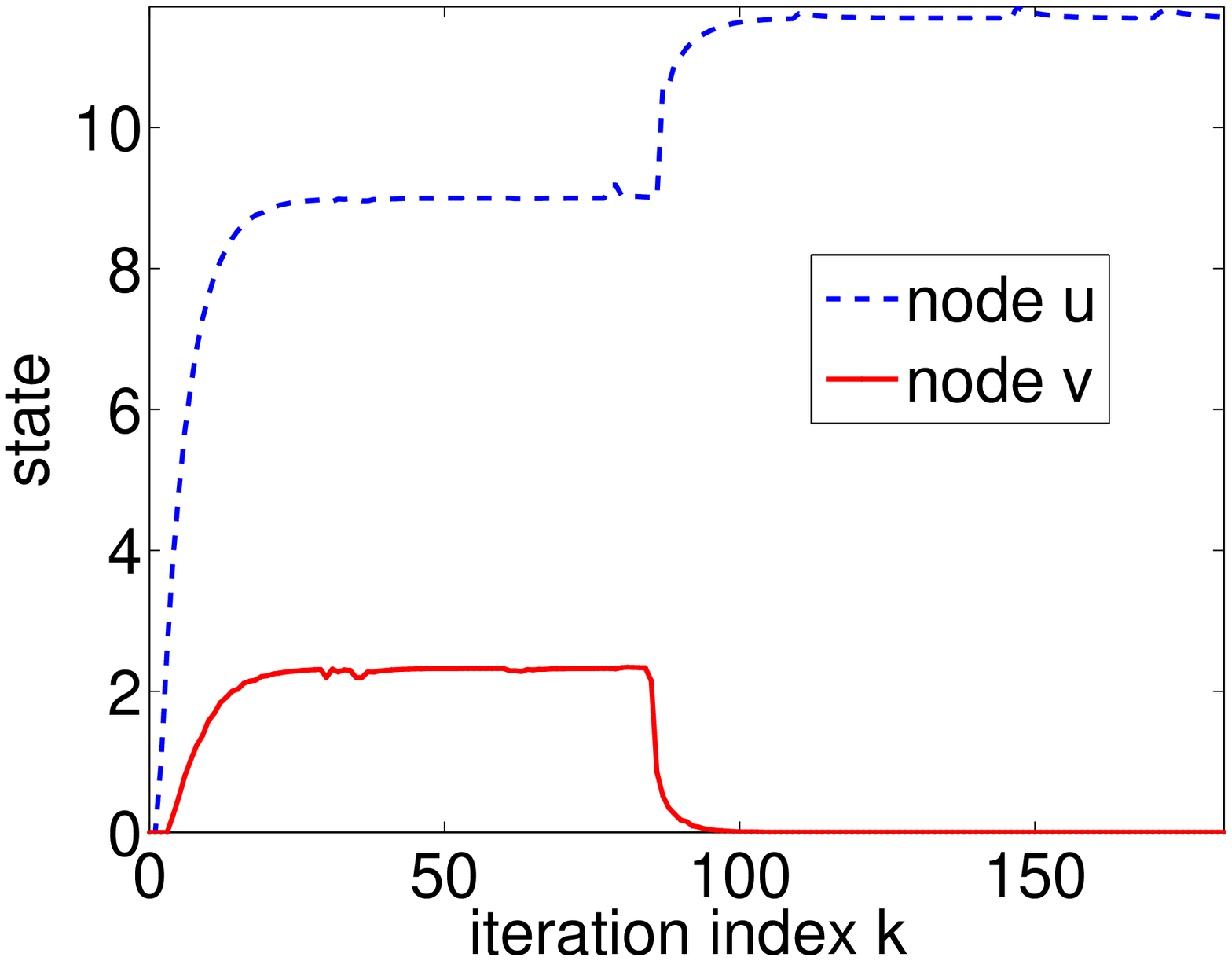}}
\end{center}
\caption{(a) The network topology during the outdoor deployment. (b) The states of nodes $u$ and $v$ (as labeled in (a)), which are connected and disconnected, respectively, from the source after the cut has occurred.}
\label{fig:outdoor_results}
\end{figure}

In the Reliable Neighbor Discovery Phase each node broadcasts a set of beacons
in a small, fixed, time interval. Upon receiving a beacon from node $v_i$, a node updates the number of beacons received from node $v_i$. Next, an iteration of the DSSD algorithm executes. To determine whether a communication link is established, each node first computes for each of its neighbors the Packet Reception Ratio (\textrm{PRR}), defined as the ratio of the number of successfully received beacons received from, to the total number of beacons sent by, a neighbor. A neighbor is deemed reliable if the $\textrm{PRR}>0.8$. After receiving state information from neighbors, a node updates its state according to Equation~\eqref{eq:stateupdatelaw} and broadcasts its new state. When broadcast from a neighbor is not received for 2 iterations, the last reported state of the neighbor is used for calculating the state. A neighbor from which broadcast is not received for 4 iterations is permanently removed from the neighbor table. The state is stored in the 512KB on-board flash memory at each iteration (for a total of about 1.6KB for 200 iterations) for post-deployment analysis. In order to monitor connectivity information each node broadcasts its neighbor table along with the state.

To ensure a lock-step execution of the algorithm, all nodes are started at approximately the same time. For this, a mote acting as a base station, connected to a laptop, broadcasts a \textquotedblleft system start\textquotedblright\ message, which is resent by each sensor node at most once. The base station is also used for monitoring the execution of the algorithm and monitoring the inter-mote communication.

\subsection{Experimental Performance Evaluation in Static Network}\label{sec:exp-static}

\begin{figure}[t]
  \centering
    \subfigure[]{\includegraphics[width=.5\textwidth]{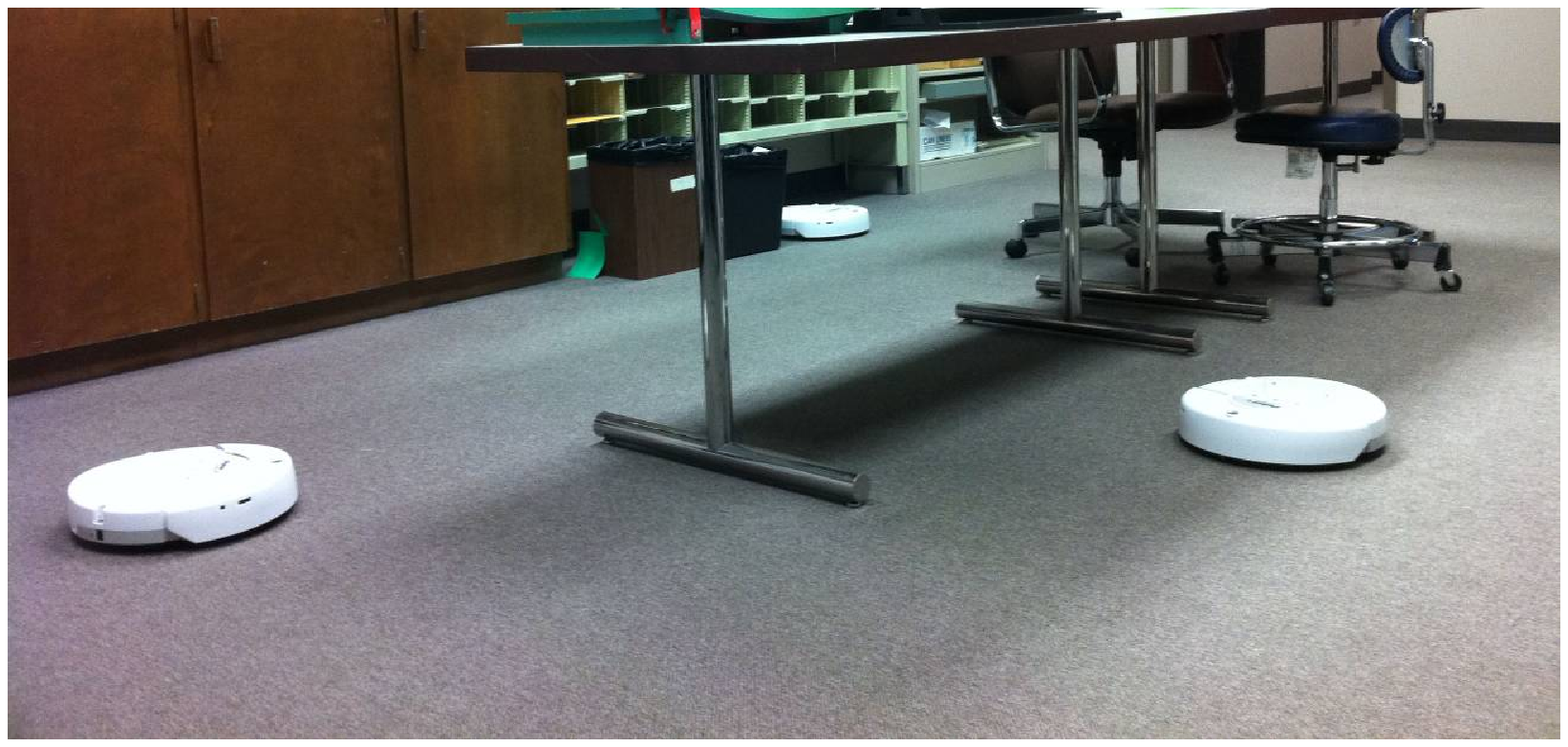}}
    \hspace{3mm}
    \subfigure[]{\includegraphics[width=.15\textwidth]{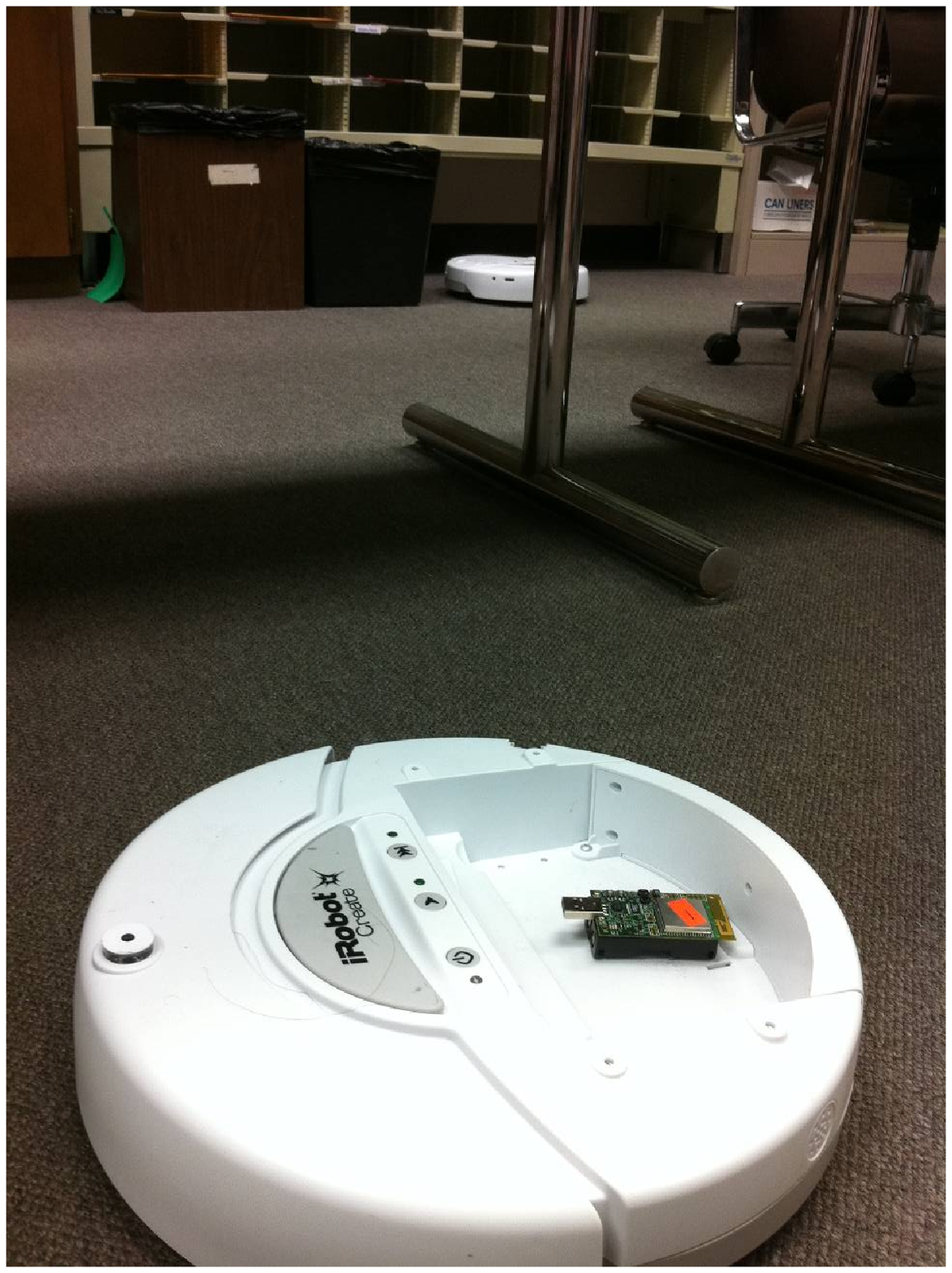}}
  \caption{(a) Test set-up for mobile network experiments. The human agents are not shown. (b) Mobile node consisting of a Berkeley mote on a Roomba robot.}
  \label{fig:mobile-exp-photo}
\end{figure}

\begin{figure}[t]
\begin{center}
\psfrag{(m)}{ft}
\psfrag{i}{$i$} \psfrag{j}{$j$} \psfrag{p}{$p$} \psfrag{q}{$q$}
\psfrag{iterindex}{$k$ (iter.~index)}
\hspace{1 cm}\subfigure[ $\G(k)$ at $k= 110$ ]{\includegraphics[scale=0.3]{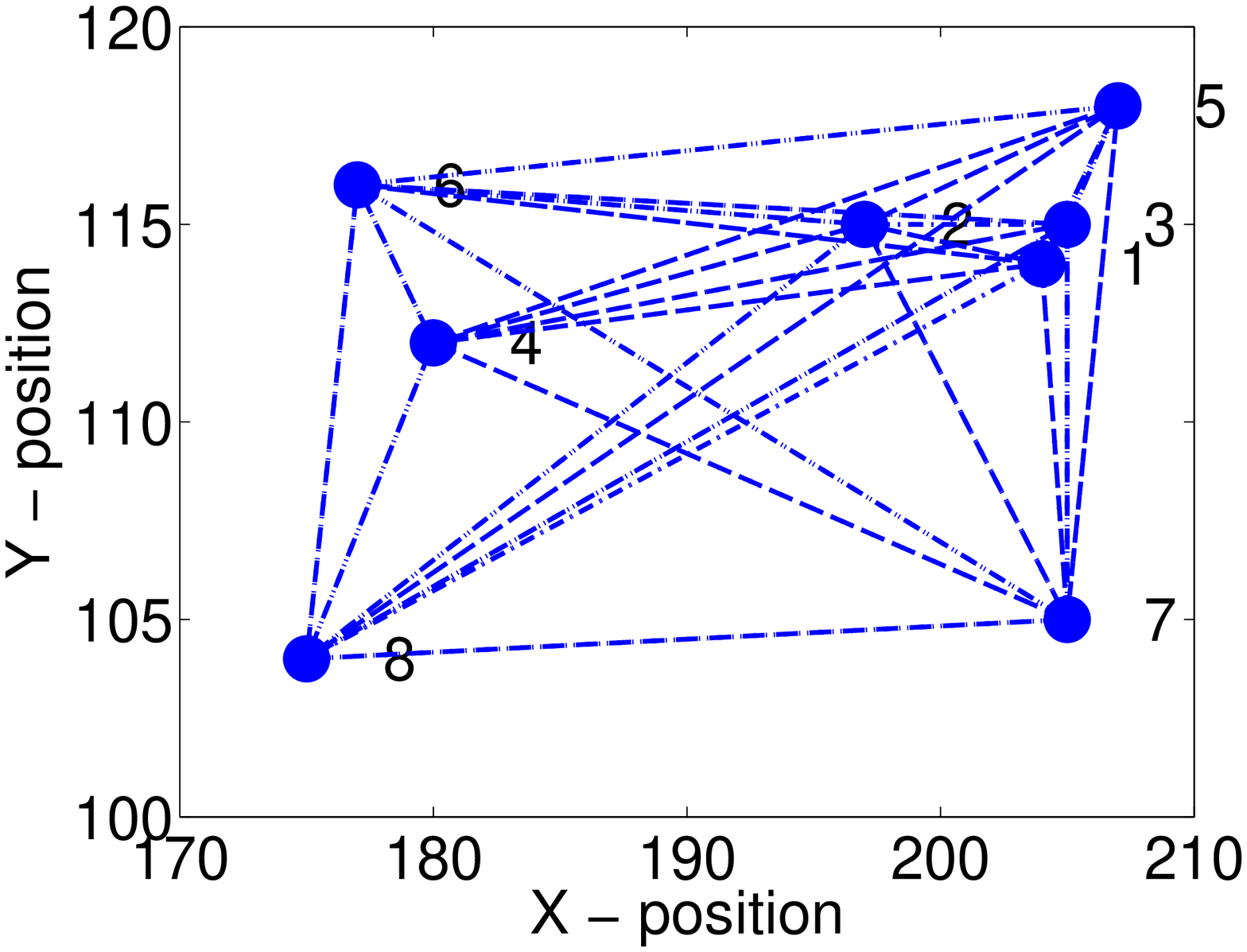}}\hspace
{8 ex}
\subfigure[ $\G(k)$ for $k = 150 $ ]{\includegraphics[scale=0.3]{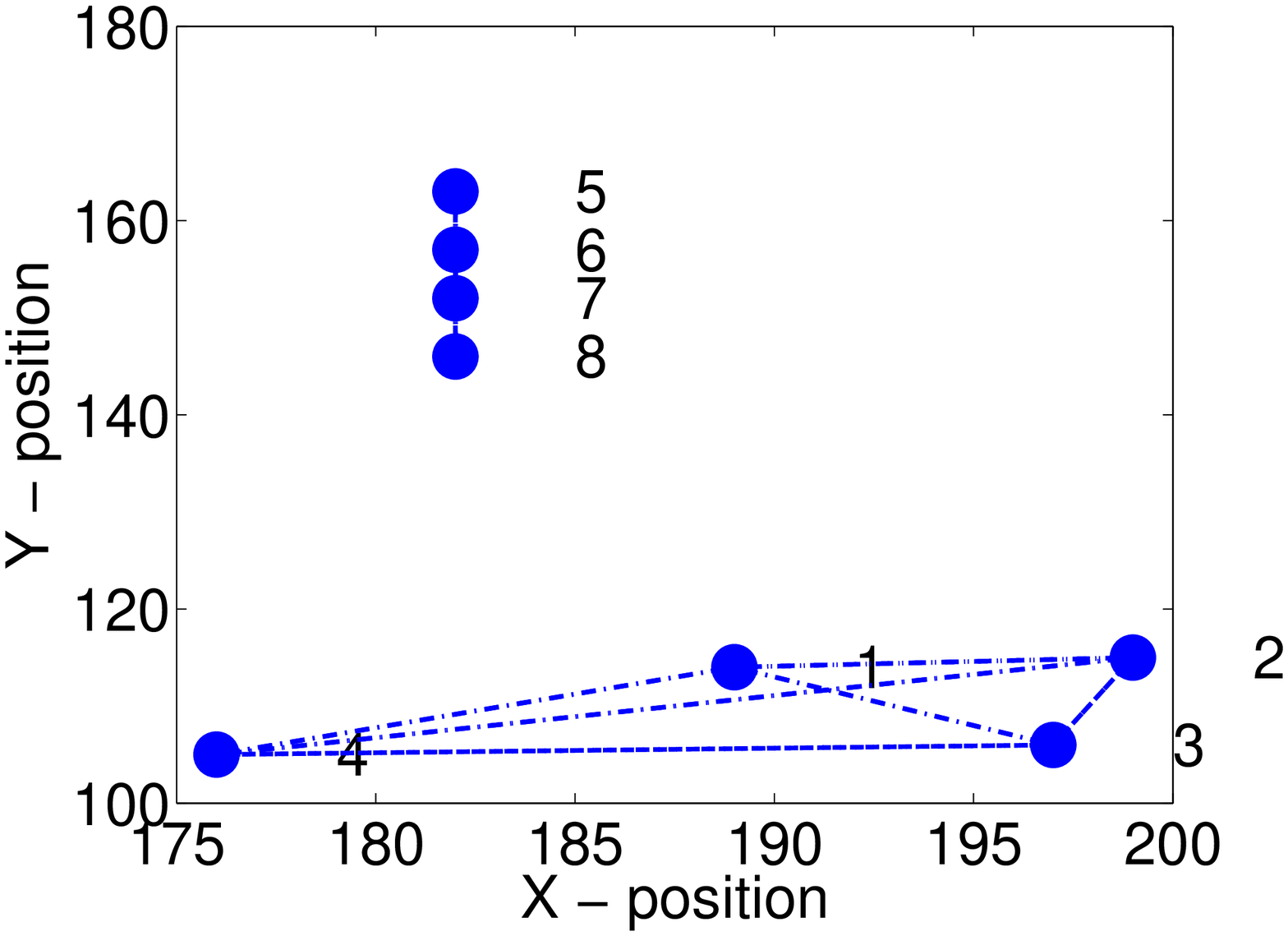}}\newline%
\vspace{- 2ex}
\hspace{1 cm}\subfigure[ $\G(k)$ for $k = 175 $ ]{\includegraphics[scale=0.3]{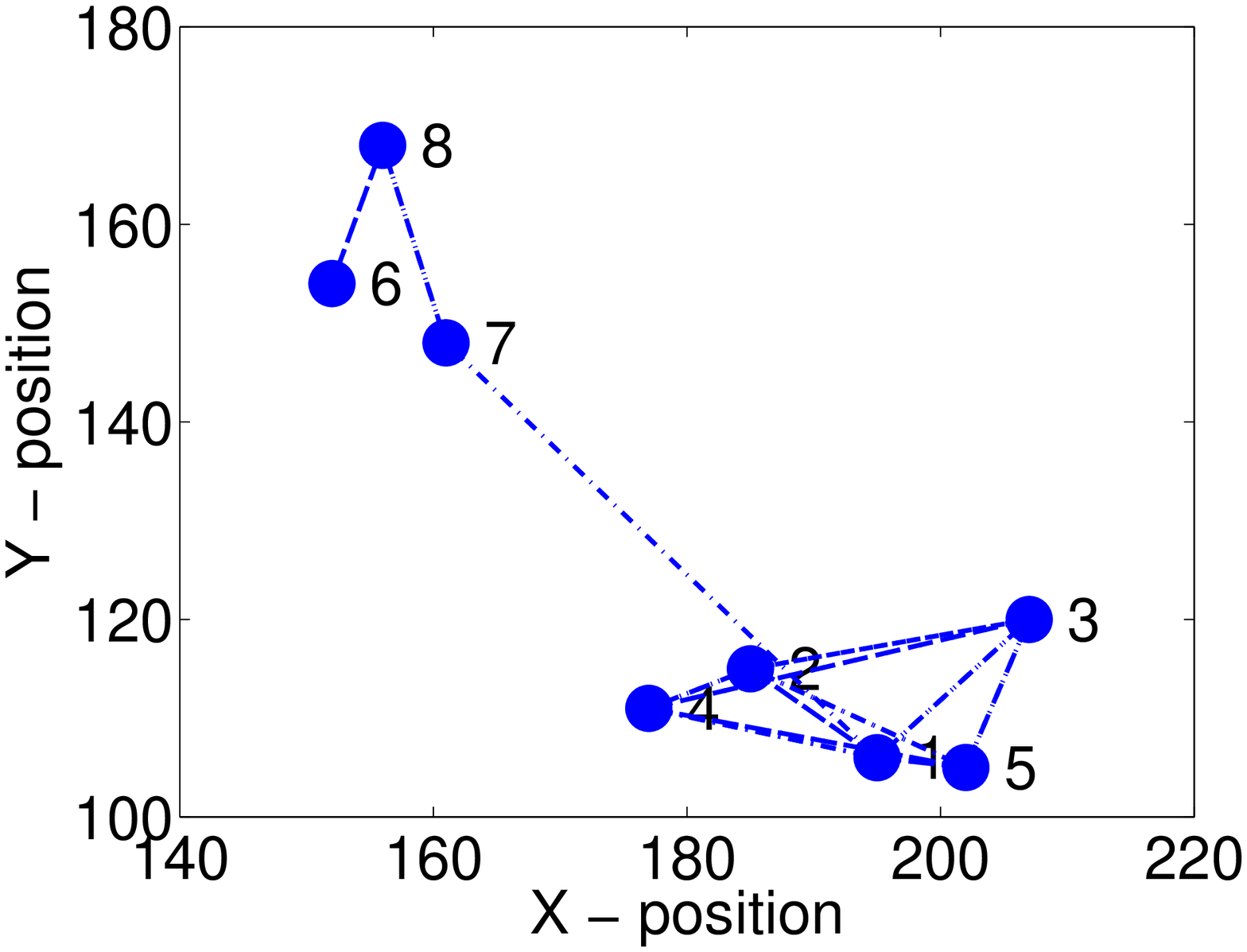}}\hspace
{8 ex}
\subfigure[ $\G(k)$ for $k = 250 $ ]{\includegraphics[scale=0.3]{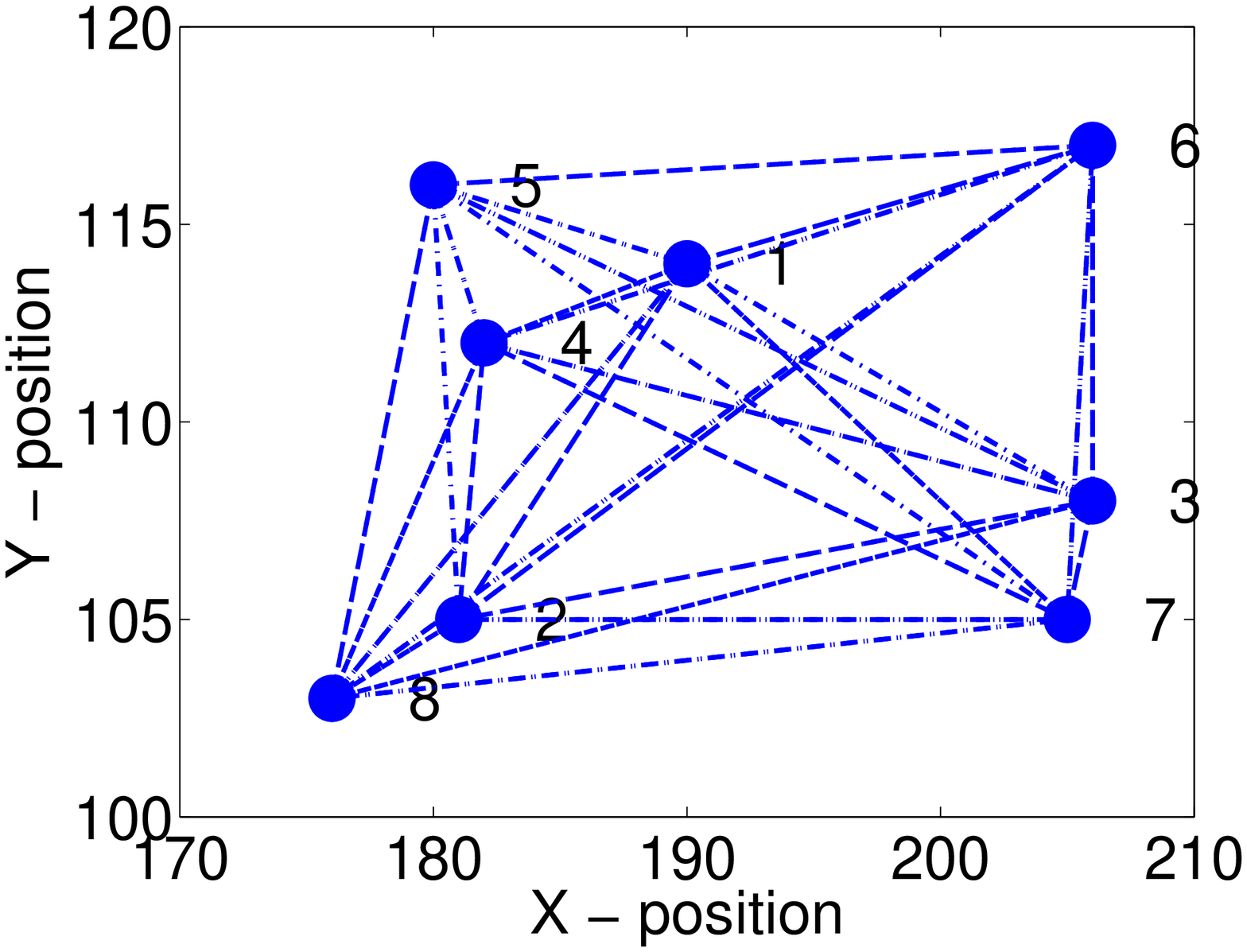}}
\end{center}
\caption{Four snapshots of a network of $8$ mobile agents and the state evolution for them resulting from the DSSD algorithm. The dashed lines represent communication links.}%
\label{fig:mobile_graph_experiment}%
\end{figure}

\begin{figure}[t]
  \centering
\includegraphics[scale=0.4, angle=-90]{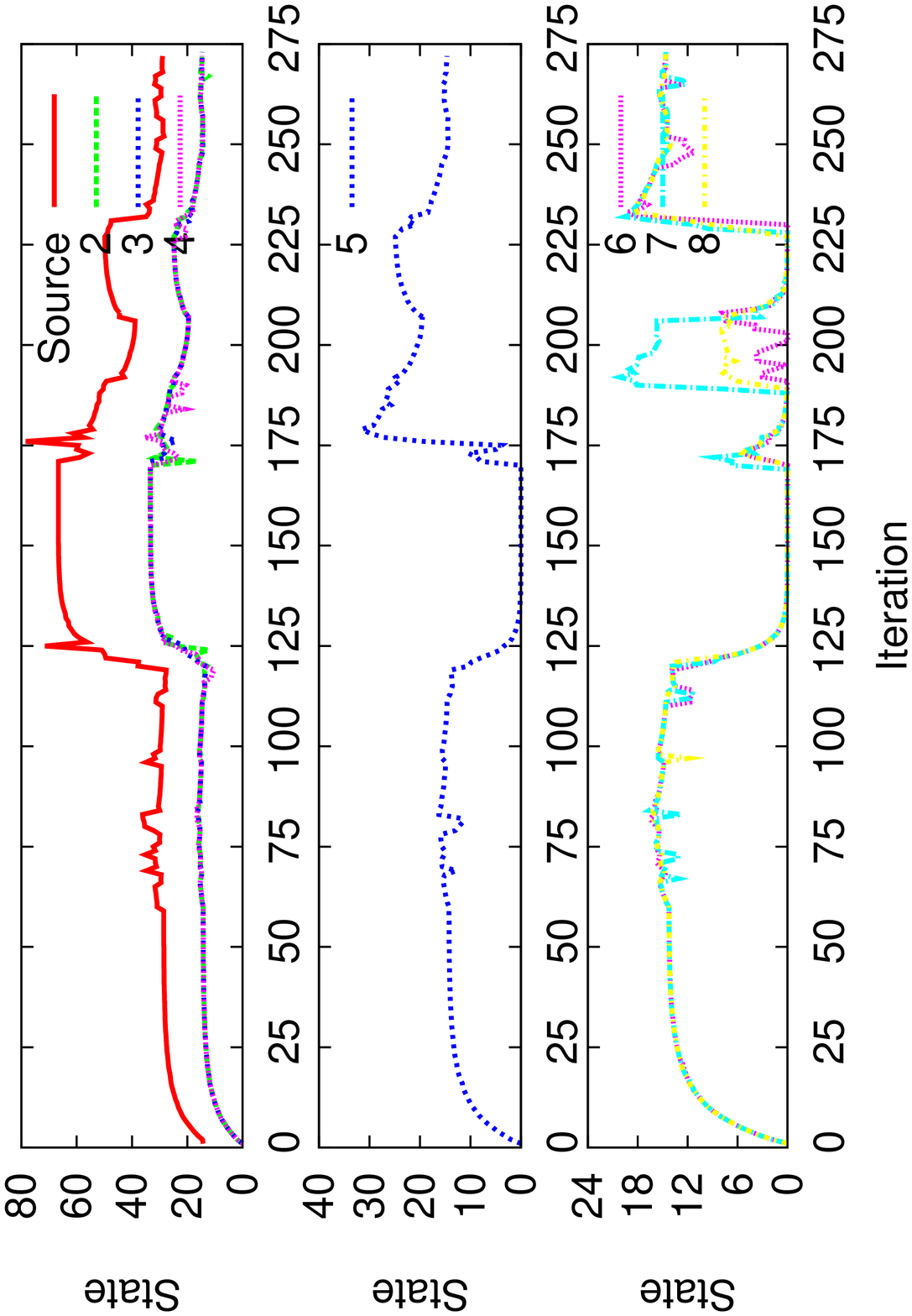}
  \caption{The states of nodes $1$ through $8$ in the mobile network
    experiment with Roomba robots and human agents.}
  \label{fig:states-mobile-exp}
\end{figure}

For evaluating the performance of our separation detection system in
static networks, we deployed a network of 24 motes in a
13$\times$5m$^2$ outdoor field at Texas A\&M University. Because the
motes were positioned on the ground the radio range was reduced
considerably with a one-hop distance of about 1.5m. The network
connectivity is depicted in~Figure~\ref{fig:outdoor_results}(a). A partial view of the outdoor deployment is shown in Figure~\ref{fig:deployment}.


In our deployment, the source strength was specified as $s=100$, the
iteration length was 5sec (this value could be reduced easily to as
small as~$200$ msec) and the cut detection threshold was $\epsilon=0.01$. Experimental results for two of the sensor nodes deployed are shown in Figure~\ref{fig:outdoor_results}. After about 30 iterations the states of all nodes converged. At iteration $k=83$ a cut is created by turning off motes inside the rectangle labeled ``Cut'' in Figure~\ref{fig:outdoor_results}(a). Figures \ref{fig:outdoor_results}(b) and \ref{fig:outdoor_results}(c) show the states for nodes $u$ and $v$, as depicted in Figure~\ref{fig:outdoor_results}(a), which were connected and disconnected, respectively, from the source node after the cut. The evolution of their states follows the aforementioned experimental scenario. Node $v$ declares itself cut from the source at $k=100$, since its state falls below the threshold $0.01$ at that time.

\subsection{Experimental Performance Evaluation in Mobile Network}\label{sec:exp-mobile}
For evaluating the performance of our separation detection system in
mobile networks we deployed 8 sensor nodes in an indoor environment,
with 4 of the nodes residing on Roomba robots and 4 on human
subjects. The scenario we emulated was that of a robotic-assisted
emergency response team. Figure~\ref{fig:mobile-exp-photo} shows part of the test
set-up with the mobile nodes.

The network topologies as well as the locations of the nodes at a few
time instants are shown in
Figure~\ref{fig:mobile_graph_experiment}. As we can see from the
figure, the topology of the network varied greatly over time due to
the mobility of the nodes.

Figure~\ref{fig:states-mobile-exp} shows the time-traces of the node
states during the experiment. The network is connected until time
$k=120$, and the states of all nodes converges to positive numbers;
see Figure~\ref{fig:states-mobile-exp}(a). This is consistent with the
prediction of theorem~\ref{thm:mobile-convergence}. At approximately
iteration $k=120$, four of the nodes (nodes 5 through 8), carried by
human subjects, are disconnected from the rest of the network, and in
particular, from the source node $1$. A sample network topology during
the time interval $k=120$ to $k=170$ is shown in
Figure~\ref{fig:mobile_graph_experiment}(b). As we can see
from~Figure~\ref{fig:states-mobile-exp}(b-c), the states of the
disconnected nodes $5$ through $8$ converge to zero. The nodes
$5,6,7,8$ detect that they are separated from the source, at times
$k=145,145,143,143$ respectively, when their states become lower than the
threshold $\epsilon  = 0.01$. At approximately iteration $k=170$, node $5$ joins back the sub-network formed by nodes $1$-$4$. As a result of node $5$ moving, node $7$ becomes a bridge between the two sub-networks. Hence, after iteration $k=170$, the states of nodes $6$, $7$ and $8$ become positive (hence, a fully connected network). However,
this re-connection is temporary, and nodes $6$ through $8$ again
become disconnected from the source after some time, which is seen in
their states. Another temporary connection occurs between the set of
nodes $6$-$8$ and the set of nodes $1$-$5$, during the time interval $k=180$
through $k=210$, followed by a separation. Finally, after iteration
$k=260$, the network becomes connected again, as shown
in~Figure~\ref{fig:mobile_graph_experiment}(d). As a result, the
states of all the nodes become positive after time $k=225$, and
they detect their re-connections to the source.

\section{Conclusions}\label{SECTION_Conclusions}
In this paper we introduced the Distributed Source
Separation Detection (DSSD) algorithm to detect network separation in
robotic and sensor networks. Simulations and hardware experiments demonstrated the
efficacy of the algorithm. DSSD requires communication only between neighbors,
which avoids the need for routing, making it particularly suitable for mobile
networks. The algorithm is distributed, doesn't require time
synchronization among nodes, and the computations involved are
simple. The DSSD algorithm is applicable to a heterogeneous network of
static as well as mobile nodes, with varying levels of resources,
precisely the kind envisioned for robotic and sensor networks.


\bibliographystyle{model1a-num-names}
\bibliography{references}

\end{document}